\documentclass[lettersize,journal]{IEEEtran}
\usepackage{amsmath,amsfonts}
\usepackage{algorithmic}
\usepackage{algorithm}
\usepackage{array}
\usepackage[caption=false,font=normalsize,labelfont=sf,textfont=sf]{subfig}
\usepackage{textcomp}
\usepackage{stfloats}
\usepackage{url}
\usepackage{verbatim}
\usepackage{graphicx}
\usepackage{cite}
\usepackage{microtype}
\usepackage{graphicx}
\usepackage{subcaption}
\usepackage{booktabs} 
\usepackage{color}
\usepackage{hyperref}

\usepackage{multirow} 
\usepackage{amsmath}
\usepackage{amssymb}
\usepackage{mathtools}
\usepackage{amsthm}

\usepackage[capitalize,noabbrev]{cleveref}

\theoremstyle{definition}
\theoremstyle{theorem}

\theoremstyle{plain}
\newtheorem{theorem}{Theorem}[section]
\newtheorem{proposition}[theorem]{Proposition}

\newtheorem{definition}[theorem]{Definition}

\theoremstyle{remark}

\usepackage[table,xcdraw]{xcolor}
\definecolor{lightgray}{gray}{0.92}
\newcommand{\xmark}{\ding{55}}
\usepackage[textsize=tiny]{todonotes}

\usepackage{amsmath}
\usepackage{amssymb}
\usepackage{mathtools}
\usepackage{amsthm}

\usepackage{pifont}       
\usepackage{bbding}       
\usepackage{pifont}    
\usepackage[capitalize,noabbrev]{cleveref}

\theoremstyle{plain}

\theoremstyle{definition}

\theoremstyle{remark}

\usepackage[textsize=tiny]{todonotes}

\begin{document}

\title{Causally Sufficient and Necessary Feature Expansion for Class-incremental Learning}

\author{
Zhen Zhang,
Jielei Chu,~\IEEEmembership{Senior~Member,~IEEE,}
Jiangtao Hu,
Bin Liu,
Jie Wang,
Ya Liu,
Tianrui Li,~\IEEEmembership{Senior~Member,~IEEE}
\thanks{Zhen Zhang is with the School of Computing and Artificial Intelligence, Southwest Jiaotong University, Chengdu 611756, China. (e-mail: zhenzhang@my.swjtu.edu.cn). Jielei Chu, Tianrui Li, Bin Liu and Jie Wang are with the School of Computing and Artificial Intelligence, Southwest Jiaotong University, Chengdu 611756, China. Jielei Chu and Tianrui Li are also with the Engineering Research Center of Sustainable Urban Intelligent Transportation, Ministry of Education, Chengdu, Sichuan 611756, China, the National Engineering Laboratory of Integrated Transportation Big Data Application Technology, Southwest Jiaotong University, Chengdu 611756, China. (e-mail: {jieleichu, trli}@swjtu.edu.cn).
Jiangtao Hu, is with Sichuan International Travel Healthcare Center, Chengdu Customs, Chengdu, 610041, Sichuan, China  (e-mail: hjt9171@163.com)
Ya Liu is affiliated with the Endocrinology Department, Hospital of Chengdu University of Traditional Chinese Medicine, Chengdu 610072, People’s Republic of China. (e-mail: liuyaya918@163.com)
\vspace{1em} 

Jielei Chu is the corresponding author.}
}

\maketitle

\begin{abstract}
Current expansion-based methods for class-incremental learning (CIL) effectively mitigate catastrophic forgetting by freezing old features. However, such task-specific features learned from the new task may collide with the old features. From a causal perspective, spurious feature correlations are the main cause of this collision, manifesting in two scopes: (i) guided by empirical risk minimization (ERM), intra-task spurious correlations cause task-specific features to rely on shortcut features. These non-robust features are vulnerable to interference, inevitably drifting into the feature space of other tasks; (ii) inter-task spurious correlations induce semantic confusion between visually similar classes across tasks. To address this, we propose a Probability of Necessity and Sufficiency (PNS)-based regularization method to guide feature expansion in CIL. Specifically, we first extend the definition of PNS to expansion-based CIL, termed CPNS, which quantifies both the causal completeness of intra-task representations and the separability of inter-task representations. Then, we introduce a dual-scope counterfactual generator based on twin networks to ensure the measurement of CPNS, which simultaneously generates: (i) intra-task counterfactual features to minimize intra-task PNS risk and ensure causal completeness of task-specific features, and (ii) inter-task interfering features to minimize inter-task PNS risk, ensuring the separability of inter-task representations. Theoretical analyses confirm its reliability. The regularization is a plug-and-play method for expansion-based CIL to mitigate feature collision. Extensive experiments on three standard datasets and four fine-grained classification datasets demonstrate the superior performance of the proposed method.
\end{abstract}

\begin{IEEEkeywords}
Feature expansion, class-incremental learning, continual learning.
\end{IEEEkeywords}

\section{Introduction}

Continual Learning (CL)~\cite{wang2024comprehensive, 11192236} addresses the critical need for learning systems to adapt to evolving data streams. A prominent scenario is class-incremental learning (CIL)~\cite{rebuffi2017icarl, wu2019large,masana2022class,zhou2024class, 10086692}, where the model encounters new classes step-by-step. The fundamental challenge lies in mitigating catastrophic forgetting~\cite{mccloskey1989catastrophic, 9477031}—the tendency of neural networks to lose former knowledge upon learning new tasks. Effective CIL frameworks must therefore strike a delicate balance within the stability-plasticity dilemma~\cite{grossberg2013adaptive}, ensuring that the accommodation of new concepts does not come at the expense of existing capabilities. One of the promising strategies is to train a new feature extractor for each new task while keeping the previously learned model intact during the training of the new task. Such methods are often referred to as \textit{expansion-based} approaches~\cite{douillard2022dytox,huang2023resolving,wang2022foster,yan2021dynamically,9815145, 10924453}. Despite the significance of expansion-based methods, they remain susceptible to feature interference. Specifically, features learned from new tasks may collide with frozen old features, leading to classification bias towards the new task~\cite{zheng2025task}.

To mitigate feature collision, expansion-based methods often introduce an auxiliary classifier for diverse feature extraction and save a small number of rehearsal samples from the previous tasks to distinguish current task samples from previous samples~\cite{douillard2022dytox,yan2021dynamically}. Although the auxiliary classifier encourages the current task model to learn features beyond the frozen features from the previous tasks, the learning process within each task is still dominated by Empirical Risk Minimization (ERM)~\cite{vapnik1999overview}. ERM tends to prioritize the most accessible discriminative cues, leading to feature suppression, where the model captures only the shortcut features sufficient to minimize training loss. Consequently,  \textbf{existing expansion-based methods mainly ensure the diversity of the shortcut features for task-specific model}. While this reliance on shortcut features temporarily mitigates feature collision, it inherently sacrifices the model's robustness against distribution shifts and constrains the representational depth essential for long-term scalability. To provide a more intuitive illustration, consider an extreme scenario in which a model is trained on two incremental tasks (see Figure \ref{motivation}). In the first task, the model learns to distinguish wolves from cats. In the second task, it learns to distinguish dogs from lynxes. Under ERM, the model tends to rely on a minimal discriminative cue, such as ear shape, while neglecting a broader set of semantic attributes. When the subsequent task involving dogs is introduced, the representation of wolves preserved in the frozen model, having largely focused on ear-related patterns, cannot offer a sufficiently robust semantic basis because dogs share similar ear structures. As a result, in order to separate dogs from wolves without modifying the frozen model, the expansion module is compelled to exploit other shortcut features, such as eyes or texture. This process leads to a fragmented feature space in which neither task learns complete and causal semantic attributes, thereby intensifying feature confusion when semantic overlap is present.

\begin{figure*}[htbp]
\begin{center}
\centerline{\includegraphics[width=\textwidth]{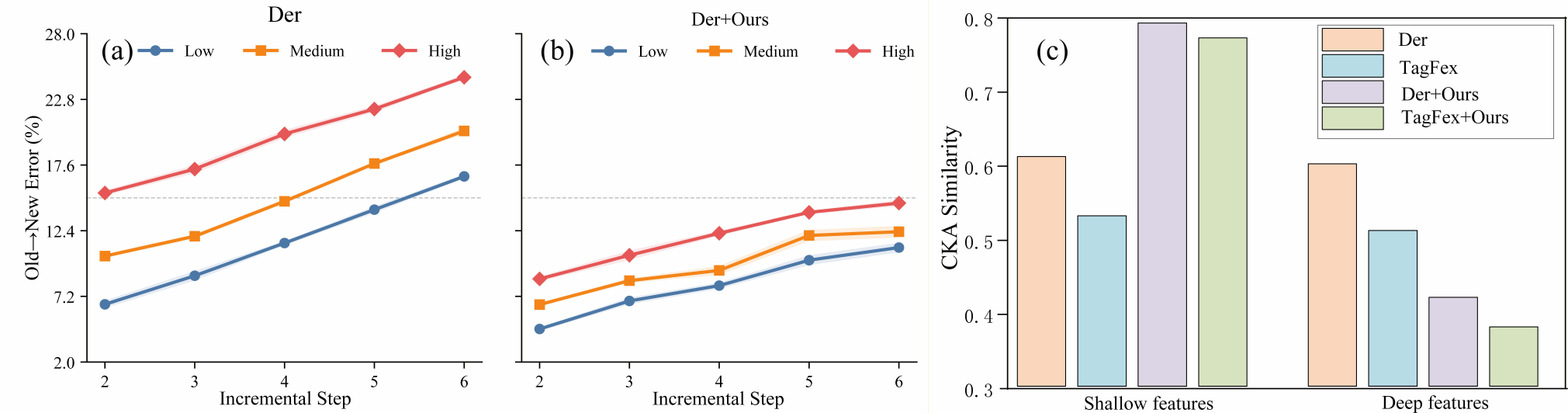}}
\caption{(a) and (b): Old→New misclassification rates grouped by semantic overlap on CUB200. Old classes are partitioned into low-, medium-, and high-overlap groups according to their maximum attribute similarity to the current new classes. A higher error rate in the high-overlap group indicates that semantically similar new classes more easily intrude into old-class decision regions. (c): CKA feature similarity analysis. Our method possesses high similarity in shallow layers (indicating shared causal semantics) while maintaining discriminability in deep layers.}
\label{motivation_data}
\end{center}
\vskip -0.3in
\end{figure*}

\begin{figure}[htbp]
\begin{center}
\centerline{\includegraphics[width=\columnwidth]{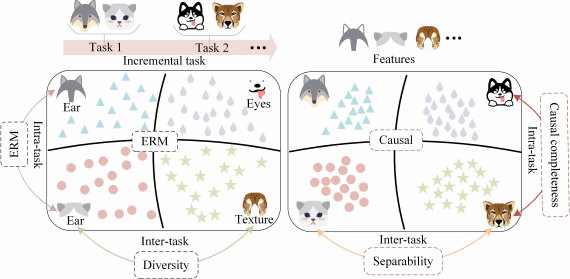}}
\caption{Illustration of feature suppression and collision. ERM and the diversity strategy drive the model to learn shortcut features (e.g., ear vs. eyes) for semantically similar classes, leading to a fragmented feature space.}
\label{motivation}
\end{center}
\vskip -0.3in
\end{figure}

To verify this phenomenon, we analyze whether the feature suppression translates into misclassification. Based on the CUB200 fine-grained dataset~\cite{wah2011caltech}, we first construct a prototype for each class and define semantic overlap as the maximum cosine similarity in the attribute space between an old class and each new class in the current incremental task~\cite{cui2022prototypical}. Then, we divide the old classes into three groups with low, medium, and high semantic overlap. At each incremental stage, we only measure the proportion of test samples from old tasks that are incorrectly predicted as classes from the current new task, which we denote as Old$\rightarrow$New Error. If feature suppression indeed leads to incomplete representations of old classes, then when the new task introduces semantically similar classes, the model is more likely to misclassify old-class samples as new classes. Therefore, the high-overlap group should exhibit a higher Old$\rightarrow$New Error. The results in Figure~\ref{motivation_data} (a) show that the baseline method yields a substantially higher error rate in the high-overlap group, indicating that its representations of old classes are more easily disrupted by interference from new classes. We also conduct a detailed analysis of feature representations using Centered Kernel Alignment (CKA) similarity~\cite{kornblith2019similarity, zhang2025comprehensive}. Figure~\ref{motivation_data} presents the CKA feature similarity of expansion-based methods in data with highly overlapping semantics. This experiment is conducted on a subset of ImageNet-1K~\cite{deng2009imagenet} and includes two incremental tasks: Task 1 involves the classification of wolves and cats, while Task 2 introduces huskies and lynxes. We compared the CKA similarity of different methods in deep features and shallow features. The baseline expansion-based method exhibits low CKA similarity across nearly all deep features and shallow features. These empirical findings support our hypothesis that ERM-driven learning focuses on different shortcut features. As shown in Figure~\ref{motivation_data} (b) and (c), integrating our method into the expansion-based baseline significantly reduces the misclassification rate of the high-overlap group. At the same time, it preserves high similarity in shallow features and low similarity in deep features. The higher CKA similarity in shallow features suggests that the model captures more complete causal features, since wolves and dogs share similar semantic characteristics. The lower CKA similarity in deep features ensures that the model retains sufficient task-specific discriminative ability.

Motivated by these observations, we argue that resolving feature collision requires moving beyond mere representation diversity. As shown in Figure~\ref{motivation}, this objective involves two key requirements: ensuring the causal completeness of representations within each task and guaranteeing feature separability across tasks. To this end, we propose a regularization method based on Probability of Necessity and Sufficiency (PNS)~\cite{pearl2009causality,yang2023invariant} to guide feature expansion in CIL. We first extend the definition of PNS to expansion-based CIL, termed CPNS, which quantifies both the causal completeness of intra-task representations and the separability of inter-task representations. Then, we analyze the causal identifiability of CPNS, which allows us to quantify CPNS using the observable data in practice. Based on this, we introduce a dual-scope counterfactual generator based on twin networks to ensure the measurement of CPNS, which simultaneously generates: (i) counterfactual features within the task to minimize intra-task PNS risk, ensuring causal completeness of task-specific features, and (ii) inter-task interfering features to minimize inter-task PNS risk, enforcing the strict discriminability against old features. Through theoretical analyses, we prove the reliability of the proposed method. The proposed regularization is a plug-and-play module that mitigates the collision of task-specific features by minimizing intra-task and inter-task PNS risk.

The main contributions of this work are summarized as follows.
\begin{itemize}
\item[$\bullet$] We propose a PNS-based regularization method for expansion-based CIL, which mitigates feature collision by ensuring the causal completeness of intra-task representations and the separability of inter-task representations, termed CPNS.

\item[$\bullet$] We theoretically demonstrate the effectiveness and reliability of the proposed method, which can be applied to mitigate feature collision for expansion-based CIL methods.

\item[$\bullet$] Extensive experiments on three standard datasets (i.e., CIFAR-100, ImageNet-100, ImageNet-1000) and four fine-grained classification datasets (i.e., CUB200, Birds525, Flower102, Food101) demonstrate the superior performance of the proposed method.

\end{itemize}

\section{Related Work and Uniqueness Discussion}

\subsection{Expansion-based CIL} Model expansion strategies, rooted in the concept of parameter isolation, typically rely on expanding the feature space to accommodate new tasks. In PNN~\cite{rusu2016progressive}, a new backbone is introduced for each task, while frozen features are reused through layer-wise connections. In DER~\cite{yan2021dynamically}, a separate backbone is trained for each task, and the resulting features are aggregated for classification. Similarly, in DyTox~\cite{douillard2022dytox}, a task-specific token is assigned to each task within a Transformer architecture. To balance plasticity and model size, a progression-compression protocol is adopted in PC~\cite{schwarz2018progress}. Following a similar idea, a two-stage strategy is employed in FOSTER~\cite{wang2022foster}, where new modules are first expanded to enhance feature representation, after which redundant parameters are removed through distillation. In addition, an energy-based bi-compatible framework is established in BEEF~\cite{wang2022beef}. In contrast to these methods, which mainly emphasize task-specific features, TagFex~\cite{zheng2025task} is designed to continually capture diverse features through a separate task-agnostic model, and these features are then aggregated with task-specific features via a merge attention mechanism, thereby alleviating feature collision and improving the diversity of the expanded features. Overall, most existing studies adopt diverse feature extraction as the primary strategy for mitigating feature conflicts.

However, this diversity strategy leads to the learning process within each task still being dominated by Empirical Risk Minimization (ERM). ERM tends to prioritize the most accessible discriminative cues, leading to feature suppression, where the model captures only the shortcut features sufficient to minimize training loss. Consequently,  existing expansion-based methods mainly ensure the diversity of the shortcut features for each task of the model. While this reliance on shortcut features temporarily mitigates feature collision, it inherently sacrifices the model's robustness against distribution shifts and constrains the representational depth essential for long-term scalability. We argue that resolving feature collision requires moving beyond mere representation diversity. This objective involves two key requirements: ensuring the causal completeness of representations within each task and guaranteeing feature separability across tasks. Therefore, we propose a novel method to ensure both the causal completeness of intra-task representations and the separability of inter-task representations, thereby mitigating the feature conflict problem in expansion-based CIL methods.

\subsection{Constraining Causal Completeness} The proposed method is significantly different from previous causal-related research~\cite{yang2023invariant,sun2025causal,yao2024multi}. We extend the causal completeness method proposed by Pearl~\cite{pearl2009causality} and propose the concept of ensuring causal completeness both across tasks and within tasks.  
Meanwhile, we propose a two-scope counterfactual modeling approach to ensure causal necessity. Furthermore, our experimental results demonstrate that CPNS consistently improves performance across various expansion-based CIL scenarios.

In summary, we propose a novel CPNS concept and a framework for expansion-based CIL scenarios. Although our work and prior studies~\cite{yang2023invariant,sun2025causal,yao2024multi,ahuja2023interventional,brehmer2022weakly} are inspired by causal theory, they differ significantly in problem settings, motivations, theoretical foundations, optimization strategies, and empirical validation. 

\section{Preliminaries}
\subsection{Problem Formulation}
In CIL, a model learns sequentially from distinct classification tasks. Let \( \mathcal{D}_t = \{(x_{t,i}, y_{t,i})\} \) denote the training data for task \( t \), with inputs \( x_{t,i} \) and labels \( y_{t,i} \). The label space is \( \mathcal{C}_t = \bigcup_i \{y_{t,i}\} \), and class sets are disjoint: \( \mathcal{C}_{t_1} \cap \mathcal{C}_{t_2} = \emptyset \) for \( t_1 \neq t_2 \). During training on task \( t_i \), the model uses only \( \mathcal{D}_{t_i} \) but is evaluated on all tasks up to \( t_i \) (i.e., \( t_j \) where \( j \leq i \)), with cumulative class set \( \bigcup_{j=1}^{t} \mathcal{C}_j \). In rehearsal-based CIL, a fixed-size buffer \( \mathcal{M} \) stores past exemplars for replay. The goal is to improve performance across all observed tasks.

\subsection{Feature Expansion of Task-Specific Models}
In this section, we introduce DER~\cite{yan2021dynamically} as an example of the feature expansion for our task-specific models. A new model $f_t(\cdot)$ is created at the beginning of each task. Also, the previously learned models $\{f_0, \dots, f_{t-1}\}$ are saved and frozen for feature extraction. The classifier is also expanded and the trained parameter weights of the old classifier is inherited. To predict the label of a sample, the concatenated feature from all of the models is used by the classifier. Formally, the classification loss $\mathcal{L}_{\text{cls}}$ with cross-entropy (CE) for the current task sample $(\boldsymbol{x_t}, y_t)$ would be,
\begin{equation}
\label{cls}
{{\cal L}_{cls}}(x_t,y_t) = {\ell _{CE}}([{f_0}(x_t), \ldots ,{f_t}(x_t)],y_t;W_{cls}^{(t)}),
\end{equation}
where $y_t \in \mathcal{C}_t$ and $W_{\text{cls}}^{(t)}$ is the parameter weights of the classifier in task $t$.

To further encourage the new module to learn diverse and discriminative features, we introduce an auxiliary classifier trained exclusively on the features from $f_{\text{t}}^{(t)}$. 
The objective of this auxiliary component is twofold: to distinguish between classes within the current task and to discriminate current samples from those of previous tasks (represented by buffer samples $(x,y)$). 
This effectively treats all past classes as a single old category. 
Consequently, the auxiliary classifier produces logits for $|\mathcal{C}_t| + 1$ classes. 
The auxiliary loss is formulated as:
\begin{equation}
    {{\cal L}_{aux}}(x,y) = {\ell _{CE}}({f_t}(x),y;W_{aux}^{(t)}),
\end{equation}
where the label space is $y \in [|\mathcal{C}_t| + 1]$ and $W_{\text{aux}}^{(t)}$ denotes the weights of the auxiliary classifier.

\subsection{Causal Analysis of expansion-based CIL}

As shown in Figure \ref{SCM} (left), we construct a Structural
Causal Model (SCM) for the data generation process within the task based on the causal generating mechanism. In this SCM, Y and X denote the label variable and corresponding generated data variable in the process. $F_{\text{c}}$ and $F_{\text{s}}$ represent the distinct sets of generating factors that are causally and non-causally related to Y. $F_{\text{mc}}$ represents one or a group of minimal necessary causal factors for the composition of X. In the SCM, the label $Y$ causes the causal factors $F_{\text{c}}$ ($Y \rightarrow F_{\text{c}}$), whereas $F_{\text{s}}$ arises independently from background noise. Both factors jointly generate the instance $X$ ($F_{\text{c}}, F_{\text{s}} \rightarrow X$). Furthermore, we denote $F_{\text{mc}} \subset F_{\text{c}}$ as the minimal sufficient causal factors—a subset of causal features that is minimally sufficient to distinguish $Y$ within the current distribution but lacks holistic semantic completeness.

\begin{figure}[htbp]
\begin{center}
\centerline{\includegraphics[width=\columnwidth]{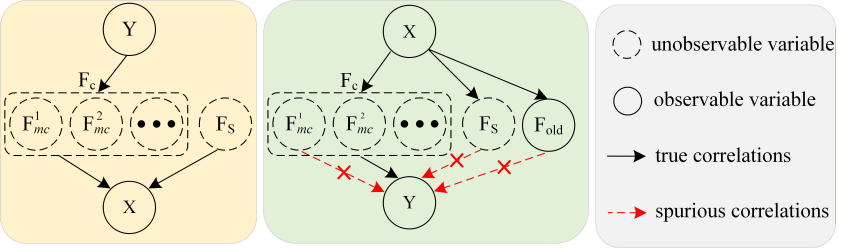}}
\caption{Structural Causal Model (SCM) for expansion-based CIL. Left:
causal generating mechanism, Right: the learning process.}
\label{SCM}
\end{center}
\vskip -0.2in
\end{figure}

Based on this, we further construct an SCM to discuss the learning process of expansion-based CIL, as illustrated in Figure \ref{SCM} (Right).  Here, $F_{\text{old}}$ represents the frozen representations learned from previous tasks. In the current $t$-th task, ideal task-specific features should capture the holistic causal generating factors $F_{\text{c}}$. However, guided by the ERM,  the model tends to rely on minimal discriminative factors rather than the complete causal set $F_{\text{c}}$. Specifically, these minimal factors often comprise either non-causal factors $F_s \rightarrow Y$ or only a subset of causal factors  $F_{\text{mc}} \rightarrow Y$. When the local or overall semantic overlap between tasks is relatively high, the shared attributes (whether spurious or incomplete causal cues) can inadvertently trigger the frozen features $F_{\text{old}}$, establishing a false confusion path that leads to feature collision. Since the current model relies on the incomplete shortcuts, it lacks the unique, holistic semantics required to distinguish itself from $F_{\text{old}}$, leading to misclassification.

Therefore, our objective is twofold. First, to ensure intra-task causal completeness by forcing the model to capture the full set $F_{\text{c}}$ rather than just $F_{\text{mc}}$ or $F_{\text{s}}$; second, to ensure inter-task discriminability by explicitly minimizing the dependency between the current features and the confounding proxy $F_{\text{old}}$.

\subsection{Probability of Necessity and Sufficiency (PNS)}

Probability of Necessity and Sufficiency is used to measure whether a cause variable is both sufficient and necessary for an outcome.

\begin{definition}[\textbf{Probability of Necessity and Sufficiency (PNS)}~\cite{pearl2009causality,yang2023invariant}]
\upshape
\label{def:pns}

Let \(X\) and \(Y\) denote the cause and outcome variables, respectively. Consider two distinct values of \(X\), denoted by \(x\) and \(\bar{x}\), where \(\bar{x} \neq x\). Then, the probability that \(X\) is a sufficient and necessary cause of \(Y=y\) can be defined as
\begin{equation}
\begin{aligned}
&\mathrm{PNS}(X):= 
\\&\underbrace{P\!\left(Y_{do(X=x)} = y \mid X = \bar{x}, Y \neq y\right)}_{\text{sufficiency}} P(X=\bar{x}, Y\neq y) \\
& + \underbrace{P\!\left(Y_{do(X=\bar{x})} \neq y \mid X = x, Y = y\right)}_{\text{necessity}} P(X=x, Y=y),
\end{aligned}
\label{eq:pns}
\end{equation}
The term \(P\!\left(Y_{do(X=x)} = y \mid X=\bar{x}, Y\neq y\right)\) characterizes the sufficiency of \(X=x\) for producing the outcome \(Y=y\). It measures the probability that the outcome would become \(y\) if we intervene and set \(X\) to \(x\), given that the factual observation is \(X=\bar{x}\) and \(Y\neq y\). The term \(P\!\left(Y_{do(X=\bar{x})} \neq y \mid X=x, Y=y\right)\) characterizes the necessity of \(X=x\) for producing the outcome \(Y=y\). It measures the probability that the outcome would become different from \(y\) if we intervene and set \(X\) to \(\bar{x}\), given that the factual observation is \(X=x\) and \(Y=y\). Therefore, the two terms in Eq.~\eqref{eq:pns} characterize the sufficiency and necessity aspects of the causal effect, respectively. A larger PNS value indicates that \(X\) is more likely to be both a sufficient and necessary cause of \(Y=y\).
\end{definition}

\section{Method}

To mitigate feature collision in expansion-based CIL, our method aims to achieve two critical objectives: ensuring the causal completeness of intra-task representations and guaranteeing the separability of inter-task representations.
In this section, we formalize these objectives using causal necessity and sufficiency. We first extend the concept of PNS to expansion-based CIL, termed \textbf{CPNS}, and discuss the identifiability of CPNS. Finally, we describe in detail the practical implementation for estimating and minimizing the CPNS risk via counterfactual generation.

\subsection{Definition of CPNS}

In expansion-based CIL, we introduce a CPNS framework inspired by PNS, which comprises two complementary probabilities: Intra-task PNS ($\text{PNS}_{\text{intra}}$) and Inter-task PNS ($\text{PNS}_{\text{inter}}$) as follows.

\begin{definition}[\textbf{Probability of Necessity and Sufficiency in Expansion-based CIL (CPNS)}]
\upshape
\label{Cpns}

Let $X_t$ and $Y_t$ denote the task-specific data and labels in the current task $t$. The data and label variables of the rehearsal samples are denoted as $X$ and $Y$, respectively. Next, we instantiate CPNS as two complementary probabilities: intra-task PNS ($\text{PNS}_{\text{intra}}$) and inter-task PNS ($\text{PNS}_{\text{inter}}$). 

To ensure the causal completeness of the intra-task representations, let the specific implementations of representation variable $\mathbf{C}$ be $\mathbf{c}$ and intra-task counterfactual features  $\bar{\mathbf{c}}_{\text{intra}}$, where $\mathbf{c}$ denotes the implementation that results in the accurate label prediction $Y_t = y_t$, and $\bar{\mathbf{c}}_{\text{intra}} \neq \mathbf{c}$ denotes the implementation resulting in $Y_t \neq y_t$. The probability that $\mathbf{C}$ is the causally complete cause of $Y_t$ can be defined as:
\begin{equation}
\begin{footnotesize}
\begin{aligned}
&\text{PNS}_{\text{intra}}(\mathbf{C}) :=\\
& \underbrace{P((Y_t)_{do(\mathbf{C}=\mathbf{c})} = y_t \mid \mathbf{C} = \bar{\mathbf{c}}_{\text{intra}}, Y_t \neq y_t)}_{\text{Intra-task Sufficiency}} P(\mathbf{C} = \bar{\mathbf{c}}_{\text{intra}}, Y_t \neq y_t) \\
& + \underbrace{P((Y_t)_{do(\mathbf{C}=\bar{\mathbf{c}}_{\text{intra}})} \neq y_t \mid \mathbf{C} = \mathbf{c}, Y_t = y_t)}_{\text{Intra-task Necessity}} P(\mathbf{C} = \mathbf{c}, Y_t = y_t),
\end{aligned}
\end{footnotesize}
\label{eq:cpns_intra}
\end{equation}
where $P((Y_t)_{do(\mathbf{C} = \mathbf{c})} = y_t \mid \mathbf{C} = \bar{\mathbf{c}}_{\text{intra}}, Y_t \neq y_t)$ denotes the probability of $Y_t = y_t$ when forcing $\mathbf{C}$ to be a specific implementation $\mathbf{c}$ via the do-operator $do(\mathbf{C} = \mathbf{c})$, given observations $\mathbf{C} = \bar{\mathbf{c}}_{\text{intra}}$ and $Y_t \neq y_t$ with probability $P(\mathbf{C} = \bar{\mathbf{c}}_{\text{intra}}, Y_t \neq y_t)$. Similarly, the second term corresponds to the case where the observations are $\mathbf{C} = \mathbf{c}$ and $Y_t = y_t$, representing the probability that $Y_t$ becomes incorrect when forcing $\mathbf{C} = \bar{\mathbf{c}}_{\text{intra}}$. A higher $\text{PNS}_{\text{intra}}$ score represents that $\text{C}$ possesses greater causal completeness to $\text{Y}_t$ in intra-task.

Second, we extend the PNS to inter-task representations to measure inter-task feature conflicts. Specifically, we fix the old-task representation \(z_{\mathrm{old}}\) and perform the intervention on the combined representation \(Z=[z_{\mathrm{old}}, c]\). We use rehearsal samples, current task samples ($X$, $Y$), and frozen features $\mathbf{z}_{{\rm{old}}}$ to quantify inter-task PNS. Let \(Z=[z_{\mathrm{old}}, c]\) denote the combined representation used for inference, which abstracts the feature concatenation in Eq.~\ref{cls}. We define two distinct implementations for $\mathbf{Z}$: (i) $\mathbf{z} = [\mathbf{z}_{\text{old}}, \mathbf{c}]$: the current task features $\mathbf{c}$ are discriminative and distinct from the frozen features $\mathbf{z}_{{\rm{old}}}$. (ii) A collision state $\bar{\mathbf{z}} = [\mathbf{z}_{\text{old}}, \bar{\mathbf{c}}_{\text{inter}}]$: the current features $\mathbf{c}$ are interfered with to exhibit high semantic overlap with $\mathbf{z}_{{\rm{old}}}$. Based on this, the $\text{PNS}_{\text{inter}}$ is defined as:
\begin{equation}
\small
\begin{aligned}
& \text{PNS}_{\text{inter}}(\mathbf{Z}) := \\
& \underbrace{P(Y_{do(\mathbf{Z}=\mathbf{z})} = y \mid \mathbf{Z} = \bar{\mathbf{z}}, Y \neq y)}_{\text{Inter-task Sufficiency}} P(\mathbf{Z} = \bar{\mathbf{z}}, Y \neq y) \\
& + \underbrace{P(Y_{do(\mathbf{Z}=\bar{\mathbf{z}})} \neq y \mid \mathbf{Z} = \mathbf{z}, Y = y)}_{\text{Inter-task Necessity}} P(\mathbf{Z} = \mathbf{z}, Y = y),
\end{aligned}
\label{eq:cpns_inter}
\end{equation}
The first term (Inter-task Sufficiency) measures whether forcing the current representation to a separable feature $\mathbf{c}$ is sufficient to recover the correct prediction from an inter-task collision state $\bar{\mathbf{c}}_{\text{inter}}$, given the presence of frozen old-task features $\mathbf{z}_{\text{old}}$. The second term (Inter-task Necessity) measures whether the correct prediction would be lost if the current representation were replaced by the colliding implementation $\bar{\mathbf{c}}_{\text{inter}}$ while keeping fixed $\mathbf{z}_{\text{old}}$. Therefore, a higher $\text{PNS}_{\text{inter}}$ score indicates that the learned representation $\mathbf{C}$ remains both necessary and sufficient for correct prediction under interference from previous-task features, and thus possesses stronger inter-task separability.
\end{definition}

\textbf{Definition \ref{Cpns}} indicates that a higher CPNS score signifies a high probability that $\mathbf{C}$ acts as a causally complete cause for the current task while maintaining more robust separability with frozen old features. We aim to constrain the CPNS score of the learned representations to achieve a robust and accurate expansion-based CIL model.

\subsection{Causal Identifiability of CPNS}

 Identifiability refers to the ability to uniquely infer causal effects from observable data under given assumptions~\cite{pearl2009causality}. To derive CPNS from observable data, we discuss the causal identifiability of CPNS in this section. 

 In previous studies, identifying causal probabilities typically assumes that statistical data are derived under exogeneity and monotonicity conditions~\cite{pearl2009causality,yang2023invariant}. Exogeneity posits that the learned representations ($\mathbf{C}$ within tasks and $\mathbf{Z}$ between tasks) are independent of latent confounders (i.e., $P(Y|C) = P(Y|do(C))$). Monotonicity implies a consistent, unidirectional effect of feature quality on the outcome $Y$—specifically, that enhancing the causal representation does not decrease the probability of a correct prediction. However, in the context of Continual Learning (CL), exogeneity is frequently violated. The continuous shift in data distribution and the sequential nature of learning introduce latent confounders, such as temporal spurious correlations and task-specific biases. These confounders influence both the representation learning process and the decision boundary. To address this, 
we relax the exogeneity assumption. Based on \textbf{Definition~\ref{Cpns}}, we extend \textbf{Theorem 9.2.15 in Pearl (2009)~\cite{pearl2009causality}} to CIL for CPNS, obtaining:  

\begin{theorem}[\textbf{Causal Identifiability of CPNS under Monotonicity}]
\label{theorem:indpns} 
Under the monotonicity assumption, the CPNS is identifiable and defined by the difference in interventional distributions:

First, the identifiability of intra-task PNS:
\begin{equation}
\label{eq:indintra} 
\begin{split}
\text{PNS}_{\text{intra}}(\mathbf{C}) &= P(Y_t = y_t \mid do(\mathbf{C} = \mathbf{c})) \\
&\quad - P(Y_t = y_t \mid do(\mathbf{C} = \bar{\mathbf{c}}_{\text{intra}})),
\end{split}
\end{equation}

Second, the identifiability of inter-task PNS:
\begin{equation}
\label{eq:indinter} 
\begin{split}
\text{PNS}_{\text{inter}}(\mathbf{Z}) &= P(Y = y \mid do(\mathbf{Z} = \mathbf{z})) \\
&\quad - P(Y = y \mid do(\mathbf{Z} = \bar{\mathbf{z}})),
\end{split}
\end{equation}
where $do(\cdot)$ denotes the causal intervention operator. 
\end{theorem}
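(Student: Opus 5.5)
We follow the proof of Theorem 9.2.15 in Pearl (2009), adapted to the two counterfactual pairs in Definition~\ref{Cpns}. We treat the intra-task statement in detail; the inter-task case is the same argument with $(\mathbf{C},\mathbf{c},\bar{\mathbf{c}}_{\text{intra}},Y_t,y_t)$ replaced by $(\mathbf{Z},\mathbf{z},\bar{\mathbf{z}},Y,y)$, since $\mathbf{z}_{\text{old}}$ is held fixed and only the $\mathbf{c}$-block is varied. The key point is to work with the joint distribution of potential outcomes rather than conditional observational quantities, so that exogeneity is never invoked.

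First, write the two potential outcomes as $A:=\{(Y_t)_{do(\mathbf{C}=\mathbf{c})}=y_t\}$ and $B:=\{(Y_t)_{do(\mathbf{C}=\bar{\mathbf{c}}_{\text{intra}})}=y_t\}$. By consistency, on the event $\mathbf{C}=\bar{\mathbf{c}}_{\text{intra}}$ the factual $Y_t$ coincides with the outcome of $do(\mathbf{C}=\bar{\mathbf{c}}_{\text{intra}})$, so $Y_t\neq y_t$ there means $\neg B$. Likewise, on $\mathbf{C}=\mathbf{c}$ the event $Y_t=y_t$ means $A$.

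With this, each weighted term in Eq.~\eqref{eq:cpns_intra} becomes a joint probability. The sufficiency term equals $P(A,\neg B,\mathbf{C}=\bar{\mathbf{c}}_{\text{intra}})$ and the necessity term equals $P(A,\neg B,\mathbf{C}=\mathbf{c})$. Since the definition contrasts the two implementations as the relevant realizations of $\mathbf{C}$, we take $\{\mathbf{c},\bar{\mathbf{c}}_{\text{intra}}\}$ to exhaust the support in question, i.e.\ we consider a binary intervention. Summing the two terms then marginalizes over $\mathbf{C}$ and gives $\text{PNS}_{\text{intra}}(\mathbf{C})=P(A\wedge\neg B)$. This is the classical expression of PNS as a joint counterfactual probability.

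Next, we apply monotonicity, formalized as $B\Rightarrow A$: if the prediction is correct under the counterfactual (degraded) feature, it is also correct under the complete feature $\mathbf{c}$. Hence $P(\neg A\wedge B)=0$, and
\begin{equation*}
P(A\wedge\neg B)=P(A)-P(A\wedge B)=P(A)-P(B).
\end{equation*}
Finally, $P(A)=P(Y_t=y_t\mid do(\mathbf{C}=\mathbf{c}))$ and $P(B)=P(Y_t=y_t\mid do(\mathbf{C}=\bar{\mathbf{c}}_{\text{intra}}))$ by the definition of the interventional distribution, which yields Eq.~\eqref{eq:indintra}. Note that no step equates $P(Y\mid \mathbf{C})$ with $P(Y\mid do(\mathbf{C}))$, which is consistent with the paper's relaxation of exogeneity. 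The result is identifiable in the sense that it depends only on interventional distributions, which the counterfactual generator can simulate; it is not claimed to be identifiable from purely observational data.

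The main obstacle is the marginalization step. The definition weights by $P(\mathbf{C}=\bar{\mathbf{c}}_{\text{intra}},Y_t\neq y_t)$ and $P(\mathbf{C}=\mathbf{c},Y_t=y_t)$, and the sum collapses to $P(A\wedge\neg B)$ only if $\mathbf{C}$ is restricted to the two values $\mathbf{c},\bar{\mathbf{c}}_{\text{intra}}$. We would therefore state explicitly that $\mathbf{C}$ is dichotomized by the generator, so that these two implementations are its only realizations. We would also make the monotonicity assumption precise at the level of potential outcomes ($B\Rightarrow A$) rather than the informal "does not decrease the probability" wording, since only the event-level version gives $P(\neg A\wedge B)=0$. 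The inter-task case then follows verbatim, with monotonicity read as: correctness under the collision state $\bar{\mathbf{z}}$ implies correctness under the separable state $\mathbf{z}$. This gives Eq.~\eqref{eq:indinter}.
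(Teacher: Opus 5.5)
Your proposal is correct and follows the same route as the paper, which likewise adapts Pearl's Theorem 9.2.15. Both arguments write each PNS as the joint counterfactual probability $P\big((Y_t)_{do(\mathbf{C}=\mathbf{c})}=y_t,\ (Y_t)_{do(\mathbf{C}=\bar{\mathbf{c}}_{\text{intra}})}\neq y_t\big)$, use monotonicity to eliminate the cross term so that this equals the difference of the two interventional probabilities, and repeat the argument for $\mathbf{Z}$ with $\mathbf{z}_{\text{old}}$ fixed. Your explicit use of consistency, of the binary support $\{\mathbf{c},\bar{\mathbf{c}}_{\text{intra}}\}$ (needed to collapse the weighted definition into that joint probability), and of the event-level form of monotonicity makes precise what the paper leaves implicit.
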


According to \textbf{Theorem \ref{theorem:indpns}}, the computation of CPNS is theoretically feasible. Unlike standard observational metrics, Eq. \eqref{eq:indintra} and Eq. \eqref{eq:indinter} quantify the pure causal impact of the representations by isolating confounders. In practice, we approximate these interventional probabilities via our proposed dual-scope counterfactual generator, which simulates the physical intervention $do(\cdot)$ through gradient-based perturbation. The detailed proofs are provided in \textbf{Appendix I.A.}

\subsection{Measurement of CPNS Risk}

Based on \textbf{Theorem \ref{theorem:indpns}}, we propose a unified modeling method for measuring CPNS risk. Specifically, the CPNS risk consists of two parts: $\text{PNS}_{\text{intra}}$ and $\text{PNS}_{\text{inter}}$. We first establish the intra-task model and the inter-task model, respectively. For the \textbf{intra-task model}, it comprises a learnable feature extractor $f_t$ and a current task classifier $W_{\text{intra}}$. Given an input $\mathbf{x}_t$, the real-world representation is extracted as $\hat{\mathbf{c}} = f_t(\mathbf{x}_t)$, and the intra-task prediction is computed via $y_{\text{intra}} = \sigma(W_{\text{intra}}^\top \hat{\mathbf{c}})$. The \textbf{inter-task model} incorporates the learnable feature extractor $f_t$, the frozen extractors $f_{\text{old}}$ and classifier $W_{\text{inter}}$. Given an input $\mathbf{x}$, the real-world representation $\hat{\mathbf{z}}$ is derived from $\hat{\mathbf{z}} = [f_{\text{old}}(\mathbf{x}), f_t(\mathbf{x})]$, and the inter-task prediction is computed via $y = \sigma(W_{\text{inter}}^\top \hat{\mathbf{z}})$. 

The key challenge lies in how to model intra-task and inter-task counterfactual data to measure the CPNS, i.e., $\bar{\mathbf{c}}_{\text{intra}}$ and $\bar{\mathbf{c}}_{\text{inter}}$. This counterfactual data must satisfy the following conditions: altering the original prediction, adhering to the minimal change principle, and maintaining semantic authenticity~\cite{kusner2017counterfactual, ChenCounterfactualSamples, pmlr-v97-goyal19a}. To address this, we propose a twin network with a dual-scope: (i) the real-world branch to obtain $\hat{\mathbf{c}}$ and $\hat{\mathbf{z}}$, (ii) the hypothetical-world branch obtains $\bar{\mathbf{c}}_{\text{intra}}$ through the current task's gradient-based adjustment. This branch also leverages frozen feature projections from previous tasks to approximate counterfactual features $\bar{\mathbf{c}}_{\text{inter}}$ in order to acquire $\bar{\mathbf{z}} = [f_{\text{old}}(\mathbf{x}), \bar{\mathbf{c}}_{\text{inter}}]$. These branches share network structures and parameters, maintaining mirrored correspondence to ensure causal consistency and semantic authenticity~\cite{pearl2009causality}.

\begin{definition}[\textbf{Double-scope counterfactual modeling}]
\upshape
\label{Definition:Double-counterfactual}
We define the following twin-network to model $\bar{\mathbf{c}}_{\text{intra}}$ in the current task $t$:
\begin{equation}
\label{Double-counterfactual1}
\begin{aligned}
\bar{\mathbf{c}}_{\text{intra}}&=\hat{\mathbf{c}}+\Delta_{\text{intra}}, \quad
\Delta_{\text{intra}}=\nabla_{\hat{\mathbf{c}}}\ell(W_{\text{intra}}^\top\hat{\mathbf{c}},y_t),\\
&\text{s.t.}\ \mu_{\text{KL}}(\bar{\mathbf{c}}_{\text{intra}},\hat{\mathbf{c}})\le\epsilon.
\end{aligned}
\end{equation}
where $\Delta_{\text{intra}}$ arises from the gradient-based intervention for the hypothetical world. We impose a KL-divergence constraint $\mu_{\text{KL}}(\bar{\mathbf{c}}_{\text{intra}}, \hat{\mathbf{c}}) \le \epsilon$, ensuring the counterfactual feature remains within the real semantic~\cite{pearl2009causality}. Guided by the minimal change principle~\cite{kusner2017counterfactual}, we utilize the gradient direction as the most efficient path to alter predictions.

Similarly, for the inter-task counterfactual generation, we employ a MLP layer, denoted as $\mathcal{P}$, to approximate the current task features using the frozen features from previous tasks. Let $\tilde{\mathbf{c}} = \mathcal{P}(f_{\text{old}}(\mathbf{x}))$ denote the projected approximation. The inter-task counterfactual feature $\bar{\mathbf{c}}_{\text{inter}}$ is then obtained by perturbing the real-world feature $\hat{\mathbf{c}}$ based on the gradient of the distance between the actual feature and its projection. This process is formalized as:
\begin{equation}
\label{Double-counterfactual2}
\begin{aligned}
\bar{\mathbf{c}}_{\text{inter}}&=\hat{\mathbf{c}}-\Delta_{\text{inter}}, \quad
\Delta_{\text{inter}}=\beta\nabla_{\hat{\mathbf{c}}}\|\hat{\mathbf{c}}-\mathcal{P}(f_{\text{old}}(\mathbf{x}))\|^2,\\
&\text{s.t.}\ \mu_{\text{KL}}(\bar{\mathbf{c}}_{\text{inter}},\hat{\mathbf{c}})\le\epsilon.
\end{aligned}
\end{equation}
where $\beta$ is a coefficient controlling the perturbation magnitude. The KL divergence $\mu_{\text{KL}}(\bar{\mathbf{c}}_{\text{inter}}, \hat{\mathbf{c}}) \le \epsilon$ constrains semantic information consistency of $\bar{\mathbf{c}}_{\text{inter}}$. Here, the negative gradient direction drives the current features $\hat{\mathbf{c}}$ towards the frozen features, explicitly simulating the feature collision state. Consequently, the counterfactual representation for the inter-task model is constructed as $\bar{\mathbf{z}} = [f_{\text{old}}(\mathbf{x}), \bar{\mathbf{c}}_{\text{inter}}]$. This formulation allows us to evaluate the causal necessity of the current features under maximum interference from frozen features.
\end{definition}

\textbf{Definition \ref{Definition:Double-counterfactual}} presents a counterfactual modeling method for both the intra-task and inter-task settings. It leverages a twin network with a dual-scope to address the unavailability of counterfactual data in practice (Detailed proofs are provided in \textbf{Appendix I.B.}).

Thus, based on \textbf{Theorem~\ref{theorem:indpns}} and \textbf{Definition~\ref{Definition:Double-counterfactual}}, we provide a measurement method for CPNS risk as follows.

\begin{definition}[\textbf{CPNS risk}]
\label{CPNS-risk}

Consider the current task $t$ with $n$ new samples $\mathcal{D}_t = \{(x_{t,i}, y_{t,i})\}_{i=1}^n$, and a global memory buffer containing $N$ total samples (including rehearsal and current data) denoted as $\mathcal{D} = \{(x_k, y_k)\}_{k=1}^N$. Let \(f_{\mathrm{old}}(\cdot)\) denote the frozen feature extractor induced by previous tasks. 

For the intra-task scope, let $\hat{\mathbf{c}}_{i} = f_t(x_{t,i})$ be the real-world causal representation. For the inter-task scope, let $\hat{\mathbf{z}}_k = [f_{\text{old}}(x_k), f_t(x_k)]$ be the combined representation. Based on \textbf{Definition \ref{Definition:Double-counterfactual}}, we obtain the corresponding counterfactual representations \(\bar c_i:=\bar c_{\mathrm{intra},i}\) and \(\bar z_k:=[f_{\mathrm{old}}(x_k),\bar c_{\mathrm{inter},k}]\) via gradient-based intervention. Next, we define the CPNS risk, i.e., $\hat{R}_{\text{CPNS}}$ as:
\begin{equation}
\label{eq:cpns_risk}
\small
\begin{aligned}
\hat{R}_{\text{CPNS}} &= \frac{1}{n} \sum_{i=1}^n \Big[ \rho\big(\sigma(W_{\text{intra}}^\top \hat{\mathbf{c}}_i) \neq y_{t,i}\big) + \rho\big(\sigma(W_{\text{intra}}^\top \bar{\mathbf{c}}_i) = y_{t,i}\big) \Big] \\
&+ \frac{1}{N} \sum_{k=1}^N \Big[ \rho\big(\sigma(W_{\text{inter}}^\top \hat{\mathbf{z}}_k) \neq y_k\big) + \rho\big(\sigma(W_{\text{inter}}^\top \bar{\mathbf{z}}_k) = y_k\big) \Big],
\end{aligned}
\end{equation}
where $\sigma(\cdot)$ denotes the classification prediction function, which maps the model output to a predicted class label, and $\rho(\cdot)$ is the indicator function evaluating to $1$ if the condition is true and to $0$ otherwise. The first term in each bracket penalizes sufficiency violations (failure to predict correctly with real features), while the second term penalizes necessity violations (failure to predict incorrectly with counterfactual features). 
\end{definition}

In \textbf{Definition \ref{CPNS-risk}}, we propose a method to measure the causal completeness of intra-task representations and the separability of inter-task representations based on observable data in expansion-based CIL.

\textbf{Monotonicity regularization.}
The identifiability result in Theorem~\ref{theorem:indpns} relies on the monotonicity assumption. To connect this assumption with a practical training objective, we introduce an empirical monotonicity-violation measure and show that the empirical CPNS risk upper-bounds it on the observed samples.

For notational convenience, we decompose the empirical CPNS risk in Definition~\ref{CPNS-risk} as
\begin{equation}
\hat{R}_{\mathrm{CPNS}} = \hat{R}_{\mathrm{intra}} + \hat{R}_{\mathrm{inter}},
\end{equation}
where
\begin{equation}
\small
\hat{R}_{\mathrm{intra}}
=
\frac{1}{n}\sum_{i=1}^{n}
\left[
\rho\!\left(\sigma(W_{\mathrm{intra}}^{\top}\hat{c}_{i}) \neq y_{t,i}\right)
+
\rho\!\left(\sigma(W_{\mathrm{intra}}^{\top}\bar{c}_{i}) = y_{t,i}\right)
\right],
\end{equation}
and
\begin{equation}
\small
\hat{R}_{\mathrm{inter}}
=
\frac{1}{N}\sum_{k=1}^{N}
\left[
\rho\!\left(\sigma(W_{\mathrm{inter}}^{\top}\hat{z}_{k}) \neq y_{k}\right)
+
\rho\!\left(\sigma(W_{\mathrm{inter}}^{\top}\bar{z}_{k}) = y_{k}\right)
\right].
\end{equation}

\begin{definition}[\textbf{Empirical monotonicity-violation measure}]
\label{def:empirical_monotonicity}
For the intra-task scope, we define
\begin{equation}
\hat{M}_{\mathrm{intra}}
=
\frac{1}{n}\sum_{i=1}^{n}
\rho\!\left(\sigma(W_{\mathrm{intra}}^{\top}\hat{c}_{i}) \neq y_{t,i}\right)
\,
\rho\!\left(\sigma(W_{\mathrm{intra}}^{\top}\bar{c}_{i}) = y_{t,i}\right).
\end{equation}
This quantity measures the empirical frequency of monotonicity violations in the intra-task scope, namely the undesirable event that the factual representation fails to predict correctly while the degraded counterfactual representation remains correct.

Similarly, for the inter-task scope, we define
\begin{equation}
\hat{M}_{\mathrm{inter}}
=
\frac{1}{N}\sum_{k=1}^{N}
\rho\!\left(\sigma(W_{\mathrm{inter}}^{\top}\hat{z}_{k}) \neq y_{k}\right)
\,
\rho\!\left(\sigma(W_{\mathrm{inter}}^{\top}\bar{z}_{k}) = y_{k}\right).
\end{equation}
This quantity measures the empirical frequency of violations in the inter-task scope, where the collided representation is easier to classify correctly than the factual combined representation.

The overall empirical monotonicity-violation measure is defined as
\begin{equation}
\hat{M}_{\mathrm{CPNS}} = \hat{M}_{\mathrm{intra}} + \hat{M}_{\mathrm{inter}}.
\end{equation}
\end{definition}

\begin{proposition}[\textbf{Empirical CPNS risk upper-bounds empirical monotonicity violations}]
\label{prop:monotonicity_upper_bound}
For the empirical quantities defined above, the following inequality holds:
\begin{equation}
\label{eq:mono_upper_bound}
\hat{M}_{\mathrm{CPNS}} \le \hat{R}_{\mathrm{CPNS}}.
\end{equation}
\end{proposition}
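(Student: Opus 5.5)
The proof reduces to a pointwise inequality between indicator values, followed by averaging. For any two events $A$ and $B$, the indicators satisfy $\rho(A)\rho(B) = \min\{\rho(A),\rho(B)\} \le \rho(A) \le \rho(A)+\rho(B)$, because both indicators take values in $\{0,1\}$ and are nonnegative. I will apply this bound sample by sample in each scope and then average.

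\textbf{Intra-task term.} For each $i\in\{1,\dots,n\}$, set
\[
a_i=\rho\!\left(\sigma(W_{\mathrm{intra}}^{\top}\hat{c}_{i}) \neq y_{t,i}\right), \qquad b_i=\rho\!\left(\sigma(W_{\mathrm{intra}}^{\top}\bar{c}_{i}) = y_{t,i}\right).
\]
Since $a_i,b_i\in\{0,1\}$, we have $a_ib_i\le a_i+b_i$. Summing over $i$ and dividing by $n$ gives $\hat{M}_{\mathrm{intra}}\le \hat{R}_{\mathrm{intra}}$, because averaging preserves the inequality term by term.

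\textbf{Inter-task term.} The same argument applies for each $k\in\{1,\dots,N\}$ with
\[
a_k'=\rho\!\left(\sigma(W_{\mathrm{inter}}^{\top}\hat{z}_{k}) \neq y_{k}\right), \qquad b_k'=\rho\!\left(\sigma(W_{\mathrm{inter}}^{\top}\bar{z}_{k}) = y_{k}\right).
\]
Pointwise, $a_k'b_k'\le a_k'+b_k'$. Averaging over the $N$ buffer samples yields $\hat{M}_{\mathrm{inter}}\le \hat{R}_{\mathrm{inter}}$.

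Adding the two scope-wise bounds and using the decompositions $\hat{M}_{\mathrm{CPNS}}=\hat{M}_{\mathrm{intra}}+\hat{M}_{\mathrm{inter}}$ and $\hat{R}_{\mathrm{CPNS}}=\hat{R}_{\mathrm{intra}}+\hat{R}_{\mathrm{inter}}$ gives \eqref{eq:mono_upper_bound}.

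The only care needed is bookkeeping. Each $\hat{M}$ average must use the same normalization ($1/n$ or $1/N$) and the same sample set as the matching $\hat{R}$ average, and these match by Definition~\ref{def:empirical_monotonicity} and Definition~\ref{CPNS-risk}. No probabilistic assumption, such as monotonicity or exogeneity, is required: the statement is a deterministic inequality on the observed samples.

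The bound can be tightened to $\hat{M}\le\tfrac12\hat{R}$, since $ab\le\tfrac12(a+b)$ for $a,b\in\{0,1\}$. This stronger form could be stated as a remark.
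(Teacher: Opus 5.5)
Your proof is correct and follows the paper's argument: you apply the pointwise indicator bound $\rho(A)\rho(B)\le\rho(A)+\rho(B)$, average it within each scope using the matching $1/n$ or $1/N$ normalizations, and sum the intra-task and inter-task bounds. Your closing remark is also valid, since $ab=\min\{a,b\}\le\tfrac12(a+b)$ for $a,b\in\{0,1\}$ gives the sharper bound $\hat{M}_{\mathrm{CPNS}}\le\tfrac12\hat{R}_{\mathrm{CPNS}}$.
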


\begin{proof}
For each intra-task sample \(i\), define
\[
A_i := \left[\sigma(W_{\mathrm{intra}}^{\top}\hat{c}_{i}) \neq y_{t,i}\right],
\qquad
B_i := \left[\sigma(W_{\mathrm{intra}}^{\top}\bar{c}_{i}) = y_{t,i}\right].
\]
Using the elementary inequality
\[
\rho(A_i)\rho(B_i)=\rho(A_i \land B_i)\le \rho(A_i)+\rho(B_i),
\]
we obtain
\[
\begin{aligned}
&\rho\!\left(\sigma(W_{\mathrm{intra}}^{\top}\hat{c}_{i}) \neq y_{t,i}\right)
 \rho\!\left(\sigma(W_{\mathrm{intra}}^{\top}\bar{c}_{i}) = y_{t,i}\right) \\
&\le
\rho\!\left(\sigma(W_{\mathrm{intra}}^{\top}\hat{c}_{i}) \neq y_{t,i}\right)
+
\rho\!\left(\sigma(W_{\mathrm{intra}}^{\top}\bar{c}_{i}) = y_{t,i}\right).
\end{aligned}
\]
Averaging over \(i=1,\dots,n\) yields
\[
\hat{M}_{\mathrm{intra}} \le \hat{R}_{\mathrm{intra}}.
\]

Analogously, for each inter-task sample \(k\), we obtain
\[
\hat{M}_{\mathrm{inter}} \le \hat{R}_{\mathrm{inter}}.
\]
Summing the two inequalities gives
\[
\hat{M}_{\mathrm{CPNS}}
=
\hat{M}_{\mathrm{intra}}+\hat{M}_{\mathrm{inter}}
\le
\hat{R}_{\mathrm{intra}}+\hat{R}_{\mathrm{inter}}
=
\hat{R}_{\mathrm{CPNS}}.
\]
\end{proof}

Eq.~(\ref{eq:mono_upper_bound}) shows that the empirical CPNS risk serves as an upper-bound surrogate for the observed monotonicity violations on the training data. Therefore, minimizing \(\hat{R}_{\mathrm{CPNS}}\) also suppresses empirical monotonicity violations in both scopes. In the intra-task scope, this regularization discourages cases where the factual representation is not predictive while the degraded counterfactual representation remains predictive, thereby promoting representations that are more consistent with the assumed causal ordering. In the inter-task scope, it suppresses cases where the collided representation becomes more predictive than the factual combined representation, which helps preserve feature separability across tasks.

\subsection{Performance Guarantee with CPNS Risk}

In this section, we theoretically analyze the generalization performance of the proposed CPNS risk to prove its effectiveness. Specifically, we establish an upper bound on the estimation error between the empirical CPNS risk $\hat{R}_{\text{CPNS}}$ and the expected ideal CPNS risk $R_{\text{CPNS}}$ using Rademacher complexity~\cite{yin2019rademacher}.

Given $\mathcal{D}_t=\{z_{t,i} = (x_{t,i}, y_{t,i})\}_{i=1}^{n}$ i.i.d. samples drawn from the current-task distribution $\mu_t$, and $\mathcal{D}=\{z_k = (x_k, y_k)\}_{k=1}^{N}$ i.i.d. samples drawn from the buffer distribution $\mu_{\mathrm{buf}}$,  

$\mathcal{H}_{\text{intra}}$ denotes the hypothesis space for the intra-task models, parameterized by the feature extractor $f_t$ and classifier $W_{\text{intra}}$. Similarly, let $\mathcal{H}_{\text{inter}}$ denote the hypothesis space for the inter-task models, parameterized by $[f_{\text{old}}, f_t]$ and $W_{\text{inter}}$ (with the historical feature extractor $f_{\text{old}}$ frozen).  

Based on Eq.~(\ref{eq:cpns_risk}), we define the corresponding intra-task and inter-task loss functions $\ell_{\text{intra}}(h_{\text{intra}}, z)$ and $\ell_{\text{inter}}(h_{\text{inter}}, z)$. We assume that these loss functions are bounded by a positive constant $B_\ell$, such that $0 \le \ell(\cdot, z) \le B_\ell$ for all valid inputs $z$ and hypotheses. The loss function classes induced by the hypothesis spaces are defined as:
\begin{equation}
\mathcal{L}_{\text{intra}} = \left\{ z \mapsto \ell_{\text{intra}}(h_{\text{intra}}, z) \mid h_{\text{intra}} \in \mathcal{H}_{\text{intra}} \right\}
\end{equation}
\begin{equation}
\mathcal{L}_{\text{inter}} = \left\{ z \mapsto \ell_{\text{inter}}(h_{\text{inter}}, z) \mid h_{\text{inter}} \in \mathcal{H}_{\text{inter}} \right\}.
\end{equation}

The empirical Rademacher complexity of the loss function class $\mathcal{L}_{\text{intra}}$ is defined as:
\begin{equation}
\hat{\mathfrak{R}}_{\mathcal{D}_t}(\mathcal{L}_{\text{intra}}) = \mathbb{E}_{\boldsymbol{\sigma}} \left[ \sup_{h_{\text{intra}} \in \mathcal{H}_{\text{intra}}} \frac{1}{n} \sum_{i=1}^{n} \sigma_i \ell_{\text{intra}}(h_{\text{intra}}, z_{t,i}) \right],
\end{equation}
where $\boldsymbol{\sigma} = (\sigma_1, \dots, \sigma_n)^\top$ are independent Rademacher random variables uniformly taking values in $\{-1, +1\}$. 

The expected Rademacher complexity over the data distribution is then defined as the expectation of the empirical Rademacher complexity over repeated samples of size $n$:
\begin{equation}
\mathfrak{R}_n(\mathcal{L}_{\text{intra}}) = \mathbb{E}_{\mathcal{D}_t \sim \mu_t^n} \left[ \hat{\mathfrak{R}}_{\mathcal{D}_t}(\mathcal{L}_{\text{intra}}) \right].
\end{equation}

The expected Rademacher complexity for the inter-task loss function class, denoted as $\mathfrak{R}_N(\mathcal{L}_{\text{inter}})$, is defined similarly over the memory buffer $\mathcal{D}$ of size $N$. 

Before introducing the generalization bound, we define the empirical and expected total CPNS risks as $\hat{R}_{\text{CPNS}} = \hat{R}_{\text{intra}} + \hat{R}_{\text{inter}}$ and $R_{\text{CPNS}} = R_{\text{intra}} + R_{\text{inter}}$, respectively.

\begin{theorem}[\textbf{Generalization Bound for CPNS Risk}]
\label{thm:generalization_bound}
Assume the loss functions are bounded by $B_\ell$. For any $\delta > 0$, with probability at least $1 - \delta$ over the random draws of $\mathcal{D}_t$ and $\mathcal{D}$, the following inequality holds uniformly for all $h_{\text{intra}} \in \mathcal{H}_{\text{intra}}$ and $h_{\text{inter}} \in \mathcal{H}_{\text{inter}}$:
\begin{equation}
\label{Generalization}
\begin{aligned}
    R_{\text{CPNS}} &\le \hat{R}_{\text{CPNS}} + 2\mathfrak{R}_n(\mathcal{L}_{\text{intra}}) + 2\mathfrak{R}_N(\mathcal{L}_{\text{inter}}) \\
    &\quad + B_\ell \left( \sqrt{\frac{\ln(2/\delta)}{2n}} + \sqrt{\frac{\ln(2/\delta)}{2N}} \right).
\end{aligned}
\end{equation}
\end{theorem}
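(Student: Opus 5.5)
The plan is to treat the two scopes separately: prove a standard one-sided Rademacher uniform deviation bound for each scope at confidence $1-\delta/2$, then combine with a union bound. I write $R_{\text{intra}}(h_{\text{intra}})=\mathbb{E}_{z\sim\mu_t}[\ell_{\text{intra}}(h_{\text{intra}},z)]$ and $\hat R_{\text{intra}}(h_{\text{intra}})=\frac1n\sum_{i}\ell_{\text{intra}}(h_{\text{intra}},z_{t,i})$, and define the analogous quantities for the inter-task scope over $\mu_{\mathrm{buf}}$ and $\mathcal{D}$.

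\textbf{Step 1: reduce to a supremum deviation.} For the intra-task scope, set
\[
\Phi(\mathcal{D}_t)=\sup_{h_{\text{intra}}\in\mathcal{H}_{\text{intra}}}\bigl(R_{\text{intra}}(h_{\text{intra}})-\hat R_{\text{intra}}(h_{\text{intra}})\bigr).
\]
Because $0\le \ell_{\text{intra}}\le B_\ell$, replacing one sample $z_{t,i}$ changes $\Phi$ by at most $B_\ell/n$.

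\textbf{Step 2: concentrate.} McDiarmid's inequality then gives, with probability at least $1-\delta/2$,
\[
\Phi(\mathcal{D}_t)\le \mathbb{E}[\Phi(\mathcal{D}_t)]+B_\ell\sqrt{\frac{\ln(2/\delta)}{2n}}.
\]

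\textbf{Step 3: symmetrize.} Introduce a ghost sample $\mathcal{D}_t'$ drawn i.i.d.\ from $\mu_t$ and write $R_{\text{intra}}$ as the expectation of the empirical risk on $\mathcal{D}_t'$. Pulling the supremum inside the expectation (Jensen) and then inserting Rademacher signs, which is legitimate because $z_{t,i}$ and $z'_{t,i}$ are exchangeable, yields
\[
\mathbb{E}[\Phi]\le 2\mathfrak{R}_n(\mathcal{L}_{\text{intra}}).
\]

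\textbf{Step 4: repeat and combine.} Running the same argument for $\mathcal{L}_{\text{inter}}$ on the buffer sample of size $N$, with failure probability $\delta/2$, gives
\[
R_{\text{inter}}\le \hat R_{\text{inter}}+2\mathfrak{R}_N(\mathcal{L}_{\text{inter}})+B_\ell\sqrt{\frac{\ln(2/\delta)}{2N}}
\]
uniformly over $\mathcal{H}_{\text{inter}}$. The intra and inter draws are separate samples, so a union bound over the two failure events costs total probability at most $\delta$. On the complementary event both bounds hold simultaneously for all hypothesis pairs. Adding them and using $R_{\text{CPNS}}=R_{\text{intra}}+R_{\text{inter}}$ and $\hat R_{\text{CPNS}}=\hat R_{\text{intra}}+\hat R_{\text{inter}}$ gives exactly Eq.~(\ref{Generalization}).

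\textbf{Main obstacle.} The delicate point is making sure the loss $\ell_{\text{intra}}(h,z)$ is a fixed function of $(h,z)$ alone, so that the samples are i.i.d.\ and the bounded-difference property holds. The counterfactual $\bar{\mathbf{c}}_i$ from Definition~\ref{Definition:Double-counterfactual} is produced by a gradient step computed from the same sample and the same hypothesis. It therefore depends only on $(h,z_{t,i})$ and not on the other samples, so it can be absorbed into the definition of $\ell$.

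On the inter side, $f_{\text{old}}$ and the projector $\mathcal{P}$ must either be treated as fixed, or $\mathcal{P}$ must be included in $\mathcal{H}_{\text{inter}}$. I would state this explicitly. With the indicator-based loss of Eq.~(\ref{eq:cpns_risk}) the bound $B_\ell=2$ holds automatically, so boundedness is not an issue.
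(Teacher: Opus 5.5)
Your proposal is correct and follows the paper's proof step for step: the supremum deviation $\Phi$, bounded differences $B_\ell/n$, one-sided McDiarmid at level $\delta/2$, ghost-sample symmetrization giving $2\mathfrak{R}_n(\mathcal{L}_{\text{intra}})$, the same argument on the buffer of size $N$, and then a union bound. Two small points: the union bound needs no independence between $\mathcal{D}_t$ and $\mathcal{D}$, which matters because the paper's buffer also contains current-task data; and your remarks on absorbing the per-sample counterfactual into $\ell$ and on fixing $\mathcal{P}$ (or placing it in $\mathcal{H}_{\text{inter}}$) state explicitly assumptions the paper leaves implicit.
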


\textbf{Theorem \ref{thm:generalization_bound}} bounds the expected generalization error using the empirical risk and the Rademacher complexities of the hypothesis spaces, thereby providing a theoretical performance guarantee for optimizing continual learning models with CPNS risk.

\begin{proof}

We derive a high-probability bound for the generalization error by analyzing the intra-task and inter-task risks in parallel. First, for the intra-task term, we define the supremum generalization gap over $\mathcal{H}_{\text{intra}}$: 
{\small
\begin{equation}
\label{eq:phi_intra}
\begin{aligned}
    \Phi(\mathcal{D}_t) &= \sup_{h_{\text{intra}} \in \mathcal{H}_{\text{intra}}} \left( R_{\text{intra}}(h_{\text{intra}}) - \hat{R}_{\text{intra}}(h_{\text{intra}}) \right) \\
    &= \sup_{h_{\text{intra}} \in \mathcal{H}_{\text{intra}}} \left( \mathbb{E}_{z \sim \mu_t}[\ell_{\text{intra}}(h_{\text{intra}}, z)] - \frac{1}{n} \sum_{i=1}^n \ell_{\text{intra}}(h_{\text{intra}}, z_{t,i}) \right).
\end{aligned}
\end{equation}
}

Let $\mathcal{D}_t$ and $\tilde{\mathcal{D}}_t$ be two datasets of size $n$ that differ at only one index $j$. Since the expected risk $R_{\text{intra}}(h_{\text{intra}})$ does not depend on the empirical dataset $\mathcal{D}_t$, and using the inequality $|\sup A - \sup B| \le \sup |A - B|$, we obtain:
\begin{equation}
\begin{aligned}
|\Phi(\mathcal{D}_t) - \Phi(\tilde{\mathcal{D}}_t)| \le & \sup_{h_{\text{intra}} \in \mathcal{H}_{\text{intra}}} \frac{1}{n} \Big| \ell_{\text{intra}}(h_{\text{intra}}, z_{t,j}) \\
& - \ell_{\text{intra}}(h_{\text{intra}}, \tilde{z}_{t,j}) \Big| \le \frac{B_\ell}{n}.
\end{aligned}
\end{equation}

Thus, $\Phi(\mathcal{D}_t)$ satisfies the bounded differences condition with constants $c_j = B_\ell/n$ for $j = 1, \dots, n$. Applying the one-sided form of McDiarmid’s inequality, we obtain:
{\small
\begin{equation}
    \mathbb{P}\Big(\Phi(\mathcal{D}_t) - \mathbb{E}_{\mathcal{D}_t}[\Phi(\mathcal{D}_t)] \ge \varepsilon\Big) \le \exp\left( -\frac{2\varepsilon^2}{\sum_{j=1}^n c_j^2} \right) = \exp\left( -\frac{2n\varepsilon^2}{B_\ell^2} \right).
\end{equation}
}

Setting the right-hand side to $\delta/2$ yields $\varepsilon = B_\ell\sqrt{\frac{\ln(2/\delta)}{2n}}$. Hence, with probability at least $1 - \delta/2$, we obtain:
\begin{equation}
\label{eq:mcdiarmid_bound}
    \Phi(\mathcal{D}_t) \le \mathbb{E}_{\mathcal{D}_t}[\Phi(\mathcal{D}_t)] + B_\ell\sqrt{\frac{\ln(2/\delta)}{2n}}.
\end{equation}

To bound $\mathbb{E}_{\mathcal{D}_t}[\Phi(\mathcal{D}_t)]$, we introduce an independent ghost sample $\mathcal{D}'_t = \{z'_{t,1}, \dots, z'_{t,n}\}$ drawn i.i.d. from $\mu_t$, together with i.i.d. Rademacher variables $\sigma_1, \dots, \sigma_n \in \{-1, +1\}$. By standard symmetrization, we have:
{\scriptsize
\begin{equation}
\label{eq:symmetrization}
\begin{aligned}
   & \mathbb{E}_{\mathcal{D}_t}[\Phi(\mathcal{D}_t)] = \\
    &\mathbb{E}_{\mathcal{D}_t} \left[ \sup_{h_{\text{intra}} \in \mathcal{H}_{\text{intra}}} \left( \mathbb{E}_{\mathcal{D}'_t} \left[ \frac{1}{n}\sum_{i=1}^n \ell_{\text{intra}}(h_{\text{intra}}, z'_{t,i}) \right] - \frac{1}{n} \sum_{i=1}^n \ell_{\text{intra}}(h_{\text{intra}}, z_{t,i}) \right) \right] \\
    &\le \mathbb{E}_{\mathcal{D}_t, \mathcal{D}'_t} \left[ \sup_{h_{\text{intra}} \in \mathcal{H}_{\text{intra}}} \frac{1}{n} \sum_{i=1}^n \Big( \ell_{\text{intra}}(h_{\text{intra}}, z'_{t,i}) - \ell_{\text{intra}}(h_{\text{intra}}, z_{t,i}) \Big) \right] \\
    &= \mathbb{E}_{\mathcal{D}_t, \mathcal{D}'_t, \boldsymbol{\sigma}} \left[ \sup_{h_{\text{intra}} \in \mathcal{H}_{\text{intra}}} \frac{1}{n} \sum_{i=1}^n \sigma_i \Big( \ell_{\text{intra}}(h_{\text{intra}}, z'_{t,i}) - \ell_{\text{intra}}(h_{\text{intra}}, z_{t,i}) \Big) \right] \\
    &\le \mathbb{E}_{\mathcal{D}'_t, \boldsymbol{\sigma}} \left[ \sup_{h_{\text{intra}} \in \mathcal{H}_{\text{intra}}} \frac{1}{n} \sum_{i=1}^n \sigma_i \ell_{\text{intra}}(h_{\text{intra}}, z'_{t,i}) \right]  \\
    &\quad +\mathbb{E}_{\mathcal{D}_t, \boldsymbol{\sigma}} \left[ \sup_{h_{\text{intra}} \in \mathcal{H}_{\text{intra}}} \frac{1}{n} \sum_{i=1}^n -\sigma_i \ell_{\text{intra}}(h_{\text{intra}}, z_{t,i}) \right] = 2\mathfrak{R}_n(\mathcal{L}_{\text{intra}}),
\end{aligned}
\end{equation}
}
where the last equality follows from the symmetric distribution of the Rademacher variables, meaning $\sigma_i$ and $-\sigma_i$ share the same distribution.

Substituting Eq.~(\ref{eq:symmetrization}) into Eq.~(\ref{eq:mcdiarmid_bound}) implies that, with probability at least $1 - \delta/2$, the following holds uniformly for all $h_{\text{intra}} \in \mathcal{H}_{\text{intra}}$:
\begin{equation}
\label{eq:intra_final}
    R_{\text{intra}}(h_{\text{intra}}) - \hat{R}_{\text{intra}}(h_{\text{intra}}) \le 2\mathfrak{R}_n(\mathcal{L}_{\text{intra}}) + B_\ell\sqrt{\frac{\ln(2/\delta)}{2n}}.
\end{equation}

The same argument applies to the inter-task memory buffer $\mathcal{D}$ of size $N$, which gives the corresponding bound. With probability at least $1 - \delta/2$, the following holds uniformly for all $h_{\text{inter}} \in \mathcal{H}_{\text{inter}}$:
\begin{equation}
\label{eq:inter_final}
    R_{\text{inter}}(h_{\text{inter}}) - \hat{R}_{\text{inter}}(h_{\text{inter}}) \le 2\mathfrak{R}_N(\mathcal{L}_{\text{inter}}) + B_\ell\sqrt{\frac{\ln(2/\delta)}{2N}}.
\end{equation}

Applying the union bound to the two high-probability events in Eq.~(\ref{eq:intra_final}) and Eq.~(\ref{eq:inter_final}), it follows that with probability at least $1 - (\delta/2 + \delta/2) = 1 - \delta$, both inequalities hold simultaneously. Combining the above results yields Eq.~(\ref{Generalization}).

\end{proof}

\subsection{Overall Objective and Optimization Strategy}
We adopt a two-stage optimization strategy to integrate other expansion-based CIL methods and mitigate feature collisions. 

\textbf{Stage 1: Intra-Task Causal learning.}
In the first stage, we focus on establishing causally complete representations for the current task. While optimizing the base model, we explicitly enforce the intra-task sufficiency and necessity constraints. The objective function is defined as:
\begin{equation}
\label{eq:stage1}
\mathop {\min }\limits_{{f_t}} {\hat R_{{\rm{intra}}}}  + \gamma \mu_{\text{KL}}(\hat{\mathbf{c}}, \bar{\mathbf{c}}_{\text{intra}}),
\end{equation}
where $\hat{R}_{\text{intra}}$ corresponds to the intra-task term in Eq. (\ref{eq:cpns_risk}) (i.e., the intra-task CPNS risk). This stage ensures that the learned intra-task representations possess causal completeness. Meanwhile, if the representation $\hat{\mathbf{c}}$ has not reached stability, the projector fails to accurately track the feature change, leading to gradient imbalance and ineffective counterfactual generation.

\textbf{Stage 2: Joint Causal learning.}
Finally, with the projector $\mathcal{P}$ calibrated, we introduce the inter-task counterfactuals to evaluate and minimize the full CPNS risk. The total objective is: \begin{equation}
\label{eq:total_objective}
\mathop {\min}\limits_{{f_t, P}} {\hat R_{{\rm{intra}}}}+ \lambda  {\hat R_{{\rm{inter}}}}  + \gamma \mathcal{L}_{\text{KL}} + \mathcal{L}_{\text{p}} ,
\end{equation}
where $\lambda$ is the balance coefficient between $\text{PNS}_{\text{intra}}$ and $\text{PNS}_{\text{inter}}$. $\mathcal{L}_{\text{KL}}$ aggregates the semantic constraints for both counterfactual scopes. $\mathcal{L}_{\text{p}}$ encourages the projector to be continuously updated. This stage effectively expands the feature space for new classes while suppressing the interference from frozen features.

\section{Experiments}

\subsection{Experimental Setup}

\begin{table*}[htbp]
\centering
\caption{Performance results.}
\label{tab:results}
\fontsize{6.5}{7.2}\selectfont
\setlength{\tabcolsep}{3pt}
\renewcommand{\arraystretch}{0.90}
\resizebox{0.96\textwidth}{!}{
\begin{tabular}{ccccccccccc}
\toprule
Dataset & \multicolumn{4}{c}{CIFAR-100} & \multicolumn{4}{c}{ImageNet-100} & \multicolumn{2}{c}{ImageNet-1000} \\ 
\midrule
Scenarios & \multicolumn{2}{c}{10-10} & \multicolumn{2}{c}{50-10} & \multicolumn{2}{c}{10-10} & \multicolumn{2}{c}{50-10} & \multicolumn{2}{c}{100-100} \\ 
\cmidrule(lr){1-1} \cmidrule(lr){2-3} \cmidrule(lr){4-5} \cmidrule(lr){6-7} \cmidrule(lr){8-9} \cmidrule(lr){10-11}
Methods & Last & Avg & Last & Avg & Last & Avg & Last & Avg & Last & Avg \\ 
\midrule
DER & 64.35 & 75.36 & 65.27 & 72.60 & 66.71 & 77.18 & 71.08 & 77.71 & 58.83 & 66.87 \\
\rowcolor{lightgray}
w/ CPNS & \textbf{66.21} & \textbf{76.93} & \textbf{67.31} & \textbf{74.24} & \textbf{68.12} & \textbf{78.16} & \textbf{72.54} & \textbf{79.26} & \textbf{60.25} & \textbf{67.74} \\
\midrule

FOSTER & 62.20 & 74.49 & 59.80 & 67.54 & 65.68 & 76.74 & 71.60 & 77.37 & 59.03 & 67.18 \\
\rowcolor{lightgray}
w/ CPNS & \textbf{64.25} & \textbf{75.48} & \textbf{61.28} & \textbf{68.77} & \textbf{66.98} & \textbf{77.82} & \textbf{72.77} & \textbf{78.54} & \textbf{60.12} & \textbf{67.77} \\
\midrule

BEEF & 60.98 & 71.94 & 63.51 & 70.71 & 68.78 & 77.62 & 70.98 & 77.27 & 58.67 & 67.09 \\
\rowcolor{lightgray}
w/ CPNS & \textbf{62.53} & \textbf{73.15} & \textbf{64.82} & \textbf{72.04} & \textbf{70.12} & \textbf{78.83} & \textbf{72.35} & \textbf{78.44} & \textbf{59.91} & \textbf{68.15} \\
\midrule

TagFex-P & 67.34 & 78.02 & 69.26 & 74.24 & 69.21 & 78.56 & 74.13 & 79.85 & 60.14 & 67.65 \\
\rowcolor{lightgray}
w/ CPNS & \textbf{68.36} & \textbf{79.04} & \textbf{70.18} & \textbf{74.97} & \textbf{70.35} & \textbf{78.44} & \textbf{75.12} & \textbf{80.68} & \textbf{61.02} & \textbf{68.45} \\
\midrule

TagFex & 68.23 & 78.45 & 70.33 & 75.87 & 70.84 & 79.27 & 75.54 & 80.64 & 61.45 & 68.32 \\
\rowcolor{lightgray}
w/ CPNS & \textbf{69.54} & \textbf{79.66} & \textbf{71.41} & \textbf{76.89} & \textbf{72.36} & \textbf{79.96} & \textbf{77.19} & \textbf{81.33} & \textbf{62.68} & \textbf{69.36} \\
\bottomrule
\end{tabular}
}
\end{table*}

\begin{figure*}[h]
	\centering
	\includegraphics[width=\textwidth]{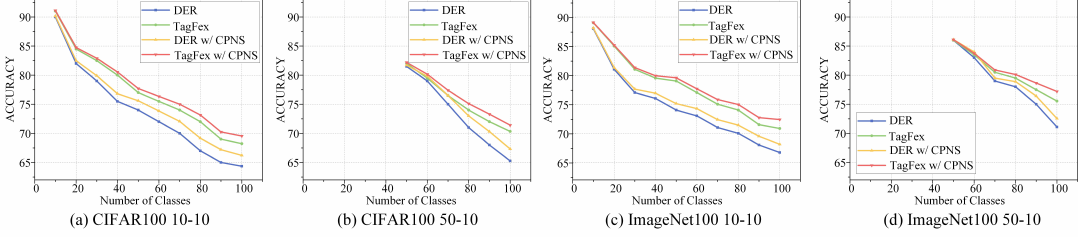}
	\caption{Accuracy curves for CPNS on various scenarios and baselines.}
	\label{zhexian}
    \vspace{-0.5cm}
\end{figure*}

\textbf{Datasets.} We evaluate the method on three standard datasets: CIFAR-100~\cite{krizhevsky2009learning} and ImageNet-100/1000~\cite{deng2009imagenet}. We further use four fine-grained benchmarks, namely CUB200~\cite{wah2011caltech}, Birds525~\cite{bird525}, Flower102~\cite{flower102}, and Food101~\cite{food101}, to assess discriminative performance under high visual similarity.

\noindent\textbf{Data Split.} The experimental settings are denoted as $B$-$I$, where $B$ represents the number of classes in the initial base task, and $I$ denotes the number of new classes introduced in each subsequent incremental task. Equal Split (e.g., 10-10)~\cite{rebuffi2017icarl}: The 100 classes are divided equally. The model is first trained on 10 base classes, followed by 9 incremental steps, each containing 10 new classes. Half Split (e.g., 50-10)~\cite{hou2019learning,yu2020semantic}: The model starts with a larger base task containing 50 classes (half of the dataset). The remaining 50 classes are then learned sequentially in batches of 10 classes per step. For datasets where the total number of classes is not divisible by the step size (e.g., Food101 with 101 classes), we follow the convention of omitting the remaining classes to ensure each incremental task has an identical number of new categories.

\textbf{Model settings.} For the baselines, i.e., DER~\cite{yan2021dynamically}, FOSTER~\cite{wang2022foster}, BEEF~\cite{Wang2023BEEFBC}, TagFex~\cite{zheng2025task}, we strictly follow their official implementations. To ensure a fair comparison, all baselines share a consistent training configuration: the number of base epochs and incremental epochs are set to 200 and 170 respectively, with a batch size of 128, and a fixed memory size of 2000. For the architecture of the projected approximation layer, we use a single-layer MLP neural network to map all the frozen features together to the current feature $f_t$. Specifically, for the current task $t$, let $f_{\text{old}} = \{f_1, f_2, \dots, f_{t-1}\}$ denote the set of frozen feature extractors from previous tasks. The input to the projection layer consists of the concatenated features from these frozen extractors. Moving onto the optimization process, we employ the Adam optimizer to train our model. The initial learning rate for all experiments is established at \( 1e-2 \). Momentum and weight decay are set at 0.95 and \( 1e-5 \), respectively. Additionally, we use grid search to set the hyperparameters \( \lambda=0.5 \) and \( \gamma=1 \), and \( \beta=0.03 \).  To implement the proposed multi-stage optimization, we set the first stage (Intra-task causal learning) to 100 epochs to ensure causal completeness of task-specific representations. For the final joint training stage, we strictly adhere to the original epoch settings and hyperparameters of each respective baseline to ensure a fair comparison. We report the accuracy of the test dataset at the end of the entire incremental training (Last), and the average test accuracy at the end of each task training across all of the incremental stages (Avg). All experimental procedures are executed using NVIDIA RTX 3090 GPUs, and all results are averaged over three runs.

\subsection{Performance Results}

Table~\ref{tab:results} presents the quantitative comparison of the proposed CPNS framework integrated with five expansion-based CIL baselines, namely DER~\cite{yan2021dynamically}, BEEF~\cite{Wang2023BEEFBC}, FOSTER~\cite{wang2022foster}, TagFex~\cite{zheng2025task}, and TagFex-P~\cite{zheng2025task}, on CIFAR-100~\cite{krizhevsky2009learning}, ImageNet-100~\cite{deng2009imagenet}, and ImageNet-1000~\cite{deng2009imagenet}. Table~\ref{tab:results} demonstrates the universal effectiveness and scalability of the proposed CPNS module across datasets of varying scales, ranging from CIFAR-100 to the large-scale ImageNet-1000. Integrating CPNS consistently yields performance gains for all baseline methods under all evaluated scenarios. Notably, on the challenging ImageNet-1000 100-100 benchmark, CPNS successfully elevates the performance of the most competitive baseline, TagFex, increasing the Last Accuracy from 61.45$\%$ to 62.68$\%$ and the Average Accuracy from 68.32$\%$ to 69.36$\%$. This consistent enhancement across standard and large-scale benchmarks indicates that our method effectively alleviates the feature conflicts in expansion-based CIL.

\begin{table*}[htbp]
\centering
\caption{Performance results on various fine-grained datasets.}
\label{tab:cub200}
\renewcommand{\arraystretch}{1.15} 
\resizebox{\textwidth}{!}{ 
\begin{tabular}{lcccccccc}
\toprule
\multirow{2}{*}{Methods} & \multicolumn{2}{c}{CUB200 100-20} & \multicolumn{2}{c}{Birds525 100-50} & \multicolumn{2}{c}{Flower102 10-10} & \multicolumn{2}{c}{Food101 10-10} \\
\cmidrule(lr){2-3} \cmidrule(lr){4-5} \cmidrule(lr){6-7} \cmidrule(lr){8-9}
 & Last & Avg & Last & Avg & Last & Avg & Last & Avg \\
\midrule
DER & 52.88 & 53.07 & 77.40 & 85.21 & 48.70 & 55.94 & 61.10 & 72.05 \\
\rowcolor{lightgray}
w/ CPNS & \textbf{56.52}\,(\textcolor{red}{+3.64}) & \textbf{55.20}\,(\textcolor{red}{+2.13}) & \textbf{79.52}\,(\textcolor{red}{+2.12}) & \textbf{87.73}\,(\textcolor{red}{+2.52}) & \textbf{51.06}\,(\textcolor{red}{+2.36}) & \textbf{57.65}\,(\textcolor{red}{+1.71}) & \textbf{63.29}\,(\textcolor{red}{+2.19}) & \textbf{73.98}\,(\textcolor{red}{+1.93}) \\
\midrule

FOSTER & 53.48 & 54.01 & 78.06 & 85.36 & 49.64 & 57.77 & 61.96 & 72.54 \\
\rowcolor{lightgray}
w/ CPNS & \textbf{55.93}\,(\textcolor{red}{+2.45}) & \textbf{56.26}\,(\textcolor{red}{+2.25}) & \textbf{80.02}\,(\textcolor{red}{+1.96}) & \textbf{87.51}\,(\textcolor{red}{+2.15}) & \textbf{51.99}\,(\textcolor{red}{+2.35}) & \textbf{59.94}\,(\textcolor{red}{+2.17}) & \textbf{63.88}\,(\textcolor{red}{+1.92}) & \textbf{74.09}\,(\textcolor{red}{+1.55}) \\
\midrule

BEEF  & 51.30 & 51.98 & 76.98 & 85.64 & 48.67 & 56.04 & 60.99 & 71.76 \\
\rowcolor{lightgray}
w/ CPNS & \textbf{53.87}\,(\textcolor{red}{+2.57}) & \textbf{54.62}\,(\textcolor{red}{+2.64}) & \textbf{79.91}\,(\textcolor{red}{+2.68}) & \textbf{87.46}\,(\textcolor{red}{+1.82}) & \textbf{51.32}\,(\textcolor{red}{+2.65}) & \textbf{57.96}\,(\textcolor{red}{+1.92}) & \textbf{63.57}\,(\textcolor{red}{+2.58}) & \textbf{73.45}\,(\textcolor{red}{+1.69}) \\
\midrule

TagFex-P & 53.15 & 53.60 & 78.61 & 86.25 & 50.98 & 57.94 & 61.89 & 72.76 \\
\rowcolor{lightgray}
w/ CPNS & \textbf{55.82}\,(\textcolor{red}{+2.67}) & \textbf{55.88}\,(\textcolor{red}{+2.28}) & \textbf{79.66}\,(\textcolor{red}{+1.30}) & \textbf{87.45}\,(\textcolor{red}{+1.20}) & \textbf{52.12}\,(\textcolor{red}{+1.14}) & \textbf{58.71}\,(\textcolor{red}{+0.77}) & \textbf{63.87}\,(\textcolor{red}{+1.98}) & \textbf{74.56}\,(\textcolor{red}{+1.80}) \\
\midrule

TagFex & 54.37 & 56.56 & 79.78 & 88.48 & 53.65 & 58.43 & 63.79 & 74.98 \\
\rowcolor{lightgray}
w/ CPNS & \textbf{56.13}\,(\textcolor{red}{+1.76}) & \textbf{58.21}\,(\textcolor{red}{+1.65}) & \textbf{81.29}\,(\textcolor{red}{+1.51}) & \textbf{89.46}\,(\textcolor{red}{+0.98}) & \textbf{55.17}\,(\textcolor{red}{+1.52}) & \textbf{59.81}\,(\textcolor{red}{+1.38}) & \textbf{65.14}\,(\textcolor{red}{+1.35}) & \textbf{75.86}\,(\textcolor{red}{+0.88}) \\
\bottomrule
\end{tabular}
}
\end{table*}

\subsection{Performance Experiments on Fine-grained Datasets} 
\textbf{Class-incremental learning on fine-grained datasets is particularly challenging due to high inter-class similarity}. In this context, expansion-based CIL models suffer from exacerbated feature suppression caused by their strategy of feature diversity.
As shown in Table~\ref{tab:cub200}, the experimental results across four fine-grained datasets—namely CUB200, Birds525, Flower102, and Food101—demonstrate the universal applicability of the proposed CPNS module. When integrated into diverse expansion-based baseline architectures ranging from DER to TagFex, CPNS consistently yields positive gains in both Last and Avg accuracy metrics. For instance, standard baselines such as DER experience substantial performance improvements, with Last accuracy increasing by up to 3.64\% on the CUB200 dataset. The evaluation also highlights the robust generalization capability of CPNS across distinct semantic domains. For example, on Flower102 and Food101, the integration of CPNS into FOSTER improves Avg accuracy by 2.17$\%$ and 1.55$\%$, respectively. This uniform trend indicates that CPNS effectively alleviates feature suppression in fine-grained class-incremental learning.

\subsection{Analysis of Causal Completeness} 

To verify whether the proposed method mitigates feature suppression and extracts more complete causal features, we conduct an intervention-based evaluation on the CUB200 dataset. Specifically, for each test image, we combine the Grad-CAM response map with the part-level annotations provided by CUB200 to rank the importance of different semantic parts~\cite{behzadi2023protocol}. Then, we progressively mask the top $k$ most important parts and record the classification accuracy after each masking step, which yields a cumulative masking curve. Let $\mathrm{Acc}(k)$ denote the last accuracy after masking the top $k$ most important parts. If a method mainly relies on a small number of shortcut features, its accuracy drops sharply when the first few parts are removed. By contrast, if the representation is more causally complete, the model should distribute evidence across multiple semantically meaningful parts, and the degradation curve should therefore be flatter.

\begin{table}[t]
    \centering
    \caption{Intervention-based evaluation on CUB200 dataset. We report the last accuracy (\%) after progressively masking the top-$k$ most important semantic parts.}
    \label{tab:masking_results}
    \setlength{\tabcolsep}{3.5pt} 
    \resizebox{\linewidth}{!}{
    \begin{tabular}{lcccccc}
        \toprule
        \multirow{2}{*}{Method} & \multicolumn{5}{c}{Top-$k$ Parts Masked} & \multirow{2}{*}{Avg. Drop $\downarrow$} \\
        \cmidrule(lr){2-6}
        & 0  & 1 & 2 & 3 & 5 & \\
        \midrule
        DER & 52.88 & 40.52 & 32.15 & 25.33 & 17.65 & 7.05 \\
        \rowcolor{lightgray}  w/ CPNS & 56.52 & 46.21 & 39.14 & 33.25 & 25.12 & 6.28 \\
        \midrule
        TagFex & 54.37 & 43.12 & 35.41 & 28.85 & 20.34 & 6.81 \\
        \rowcolor{lightgray}  w/ CPNS & 56.13 & 47.05 & 40.32 & 34.81 & 27.45 & 5.74 \\
        \bottomrule
    \end{tabular}
    } 
\end{table}

As shown in Table~\ref{tab:masking_results}, the baseline methods, DER and TagFex, exhibit a sharp drop in the last accuracy when the top-ranked semantic parts are removed, especially at $k=1$ or $2$. For example, the last accuracy of DER decreases from $52.88\%$ to $40.52\%$. This result indicates that model-expansion-based CIL tends to rely on a few highly discriminative shortcut features during learning because of its feature diversity strategy. Once these shortcut features are masked, the model performance degrades severely. In contrast, after introducing the proposed method, the performance degradation curve becomes much smoother, and the Avg.~Drop metric is significantly reduced. \textbf{These findings show that CPNS effectively mitigates feature suppression and enables the model to extract more causally complete representations}.

\subsection{Counterfactual generation performance}

To further validate the effectiveness and accuracy of the generated counterfactual representations, we conduct an evaluation focusing on the minimal intervention and causal consistency properties. We compare the proposed double-scope Counterfactual mechanism with two baseline perturbation strategies within the latent space: Random Perturbation and Projected Gradient Descent (PGD)~\cite{madry2018towards}. The evaluation is performed on the CIFAR-100 benchmark. We assess the quality of the generated counterfactual features using three quantitative metrics. The Prediction Flip Rate (PFR) measures the effectiveness of the intervention in altering the decision boundary. The Latent Kullback-Leibler Divergence (LKLD) quantifies the magnitude of the semantic change to ensure the minimal change principle is satisfied. Furthermore, the Historical Semantic Similarity (HSS) evaluates the capability of the representation to simulate inter-task collision by measuring the cosine similarity between the perturbed feature and the projected historical features~\cite{delara2024transport}.

\begin{figure}[h]
	\centering
	\includegraphics[width=0.47\textwidth]{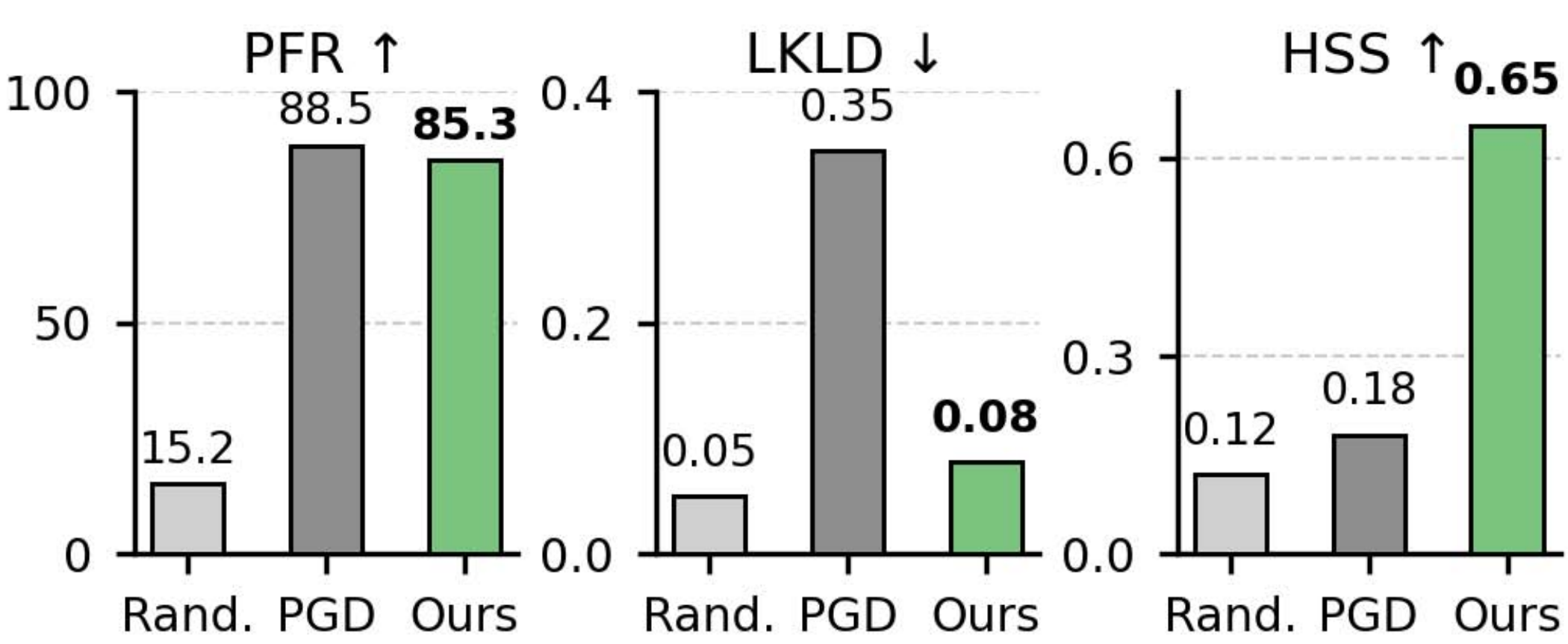}
	\caption{Validation of intra-task and inter-task counterfactual representations on CIFAR-100}
	\label{Counterfactual}
    \vspace{-0.3cm}
\end{figure}

For the intra-task counterfactuals, a valid intervention must successfully probe the sensitivity of the prediction while maintaining the minimal change constraint. As shown in Figure~\ref{Counterfactual}, the proposed method achieves a significantly higher Prediction Flip Rate compared to the random baseline under the exact same bounded neighborhood constraints. Although the PGD~\cite{madry2018towards} achieves a comparable flip rate, it incurs a substantially higher LKLD score. This high divergence indicates a violation of the minimal intervention principle, leading to an undesirable alteration of the surrounding semantic context. In contrast, the gradient-directed local intervention of the proposed method successfully identifies the steepest admissible direction within the feasible local neighborhood. This confirms the effectiveness of the intra-task mechanism as a sensitive first-order probe for necessity evaluation without generating arbitrary adversarial noise.

Regarding the inter-task counterfactuals, the objective is to test feature collision with previous tasks by simulating a minimal local displacement toward historical semantics. The experimental results demonstrate that the representations generated by the proposed method exhibit a marked increase in Historical Semantic Similarity compared to both the factual features and the alternative baselines. By effectively maximizing the similarity to the old task centers, the method successfully approximates the historical feature space. Unlike unconstrained adversarial perturbations, this controlled directional shift ensures that the counterfactual feature accurately evaluates whether the current representation can be absorbed by the frozen historical features. Consequently, the observations firmly validate the theoretical formulation of the inter-task necessity condition.

\begin{figure}[htbp]
	\centering
	\includegraphics[width=0.48\textwidth]{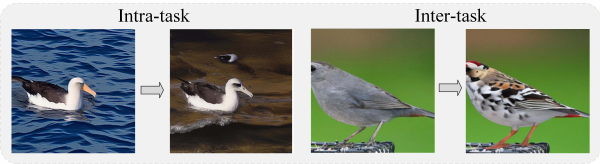}
	\caption{Examples of intra-task and inter-task counterfactual sample construction.}
	\label{counterfactuals}
    \vspace{-0.3cm}
\end{figure}

To provide a more rigorous comparison beyond generic perturbation baselines, we further incorporate representative counterfactual generation methods as complementary baselines. We first construct counterfactual samples via explicit semantic counterfactual edits using image generation methods~\cite{augustin2022diffusion}. Specifically, as illustrated in Figure~\ref{counterfactuals}, we generate two types of counterfactual samples, namely, intra-task counterfactuals and inter-task counterfactuals. Intra-task counterfactuals are obtained by modifying a subset of causal features, whereas inter-task counterfactuals are constructed by shifting the features of the current category toward those of other similar samples. To evaluate whether the generated counterfactual data satisfy the principle of accurate estimation, we use the Wasserstein distance to assess distributional consistency~\cite{wangtowards}, i.e., to examine whether the distribution of covariates is consistent with that of the original data.  

\begin{figure}[h]
	\centering
	\includegraphics[width=0.47\textwidth]{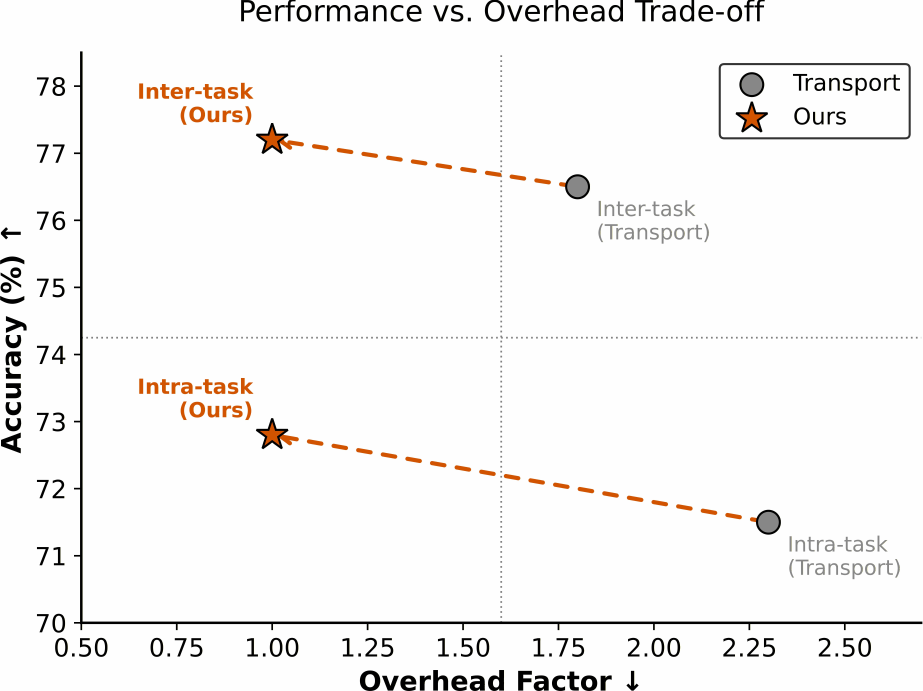}
	\caption{Complementary comparison with counterfactual generation methods.}
	\label{Counterfactual2}
    \vspace{-0.3cm}
\end{figure}

As shown in Figure~\ref{Counterfactual2}, the proposed method achieves the highest accuracy while incurring the lowest computational overhead. The transport-based method attains competitive accuracy. However, its optimization process remains more expensive than that of the proposed local twin-network construction. In contrast, our method combines strong predictive performance with low computational cost, indicating that the generated counterfactuals are not only effective for necessity evaluation but also practical for continual learning scenarios that require repeated counterfactual estimation.

\subsection{Further Analysis}

\begin{table}[h]
\centering
\caption{Ablation study on CIFAR-100 and ImageNet-100 in terms of average accuracy. The baseline is DER.}
\begin{tabular}{c c c c c c c c c }
\toprule
\multicolumn{3}{c}{Baseline:DER}  & \multicolumn{2}{c}{CIFAR-100} & \multicolumn{2}{c}{ImageNet-100}   \\
\cmidrule(lr){1-3} \cmidrule(lr){4-5}  \cmidrule(lr){6-7}
$\text{PNS}_{\text{intra}}$ & $\text{PNS}_{\text{inter}}$ & 2-Stage  & 10-10 & 50-10  & 10-10 & 50-10\\
\midrule
 \xmark & \xmark & \xmark  & 75.36 & 72.60  & 77.18 & 77.71\\
\rowcolor{lightgray} \checkmark & \xmark & \xmark  & 76.31& 73.64  & 77.78 & 78.64\\
 \xmark & \checkmark & \xmark &  74.98 & 71.84 & 76.65 & 77.21\\
\rowcolor{lightgray}\xmark & \checkmark & \checkmark  & 75.28 & 73.59   & 77.43 & 78.51\\
 \checkmark & \checkmark & \xmark & 75.16 & 72.09 & 76.98 & 77.79 \\
\rowcolor{lightgray}\checkmark & \checkmark & \checkmark  & 76.93 & 74.24   & 78.16 & 79.26\\
\bottomrule
\end{tabular}
\label{strategies}
\end{table}

\noindent\textbf{Ablation study.} As shown in Table \ref{strategies}, to validate the contribution of each component, we conducted an ablation study on CIFAR-100 and ImageNet-100. The results indicate that $\text{PNS}_{\text{intra}}$ effectively improves the baseline by ensuring the causal completeness of intra-task representations. Similarly, applying $\text{PNS}_{\text{inter}}$ alone improves the baseline performance. Notably, this setting still requires the two-stage training strategy to generate valid inter-task counterfactuals, though it excludes intra-task causal regularizations. However, simply combining both modules or directly applying $\text{PNS}_{\text{inter}}$ causes a significant performance drop, confirming that without the sequential 2-Stage strategy, optimization lag between the feature extractor and projector leads to gradient imbalance. The full 2-Stage strategy integrates all components synergistically, achieving the highest accuracy.

\begin{table}[htbp]
    \centering
    \caption{Ablation study on different distance metrics for counterfactual generation within the CPNS framework.}
    \label{tab:distance_metric_ablation}
    \renewcommand{\arraystretch}{1.1} 
    \setlength{\tabcolsep}{8pt} 
    \begin{tabular}{ccccc}
        \toprule
        \textbf{Dataset} & \multicolumn{4}{c}{CIFAR-100} \\
        \cmidrule{1-5}
        \multirow{2}{*}{\textbf{Metric Strategy}} & \multicolumn{2}{c}{10-10} & \multicolumn{2}{c}{50-10} \\
        \cmidrule(lr){2-3} \cmidrule(lr){4-5}
        & Last & Avg & Last & Avg \\
        \midrule
        DER (Baseline) & 64.35 & 75.36 & 65.27 & 72.60 \\
        \midrule
  \rowcolor{lightgray}      w/ MSE & 65.12 & 76.05 & 65.88 & 73.15 \\
        w/ Cross-Entropy& 65.45 & 76.28 & 66.10 & 73.45 \\
 \rowcolor{lightgray}       w/ Wasserstein & 65.80 & 76.55 & 66.52 & 73.80 \\
        \textbf{w/ KL Divergence} & \textbf{66.21} & \textbf{76.93} & \textbf{67.31} & \textbf{74.24} \\
        \bottomrule
    \end{tabular}
    \vspace{-0.3cm}
\end{table}

\noindent\textbf{Different distance losses.} In this subsection, we examine the role of distance metrics in the counterfactual generation objective of our proposed CPNS framework. The goal of this optimization is to bound the deviation between the real-world representation $\hat{\mathbf{c}}$ and the synthesized counterfactual representation $\bar{\mathbf{c}}_{\text{intra}}$ and $\bar{\mathbf{c}}_{\text{inter}}$, with the choice of metric directly affecting the balance between semantic consistency and causal intervention strength. We evaluate four common distance metrics (i.e., Mean Squared Error (MSE)~\cite{tsai2021self}, the Wasserstein distance~\cite{panaretos2019statistical}, cross-entropy~\cite{de2005tutorial}, and the KL divergence~\cite{hershey2007approximating}) by analyzing their impact on average accuracy in the CIFAR-100 10-10 and 50-10 scenarios. We investigate the impact of different distance metrics used to constrain the semantic consistency of generated counterfactual features. As presented in Table~\ref{tab:distance_metric_ablation}, all metric-enhanced variants outperform the DER baseline, validating the effectiveness of the proposed counterfactual intervention strategy. We attribute the success of KL divergence to its ability to serve as a semantic constraint. This allows the counterfactual generator sufficient flexibility to perturb features in non-semantic directions while maintaining the original semantic identity. Consequently, the model learns a more robust decision boundary that generalizes better across incremental tasks. Based on these findings, we adopt KL divergence as the default metric for the CPNS framework.

\begin{figure}[htbp]
\begin{center}
\centerline{\includegraphics[width=0.8\columnwidth]{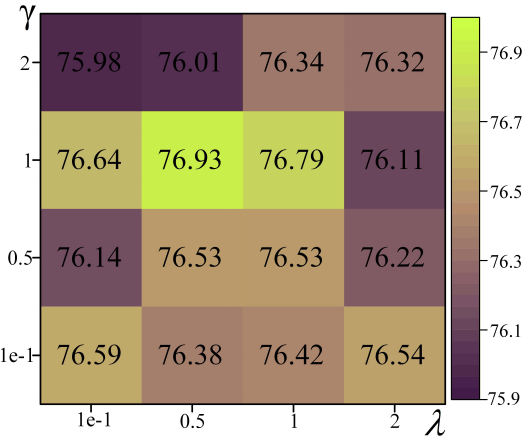}}
\caption{Hyperparameter sensitivity analysis on CIFAR-100 10-10.}
\label{hunxiao}
\end{center}
    \vspace{-0.4cm}
\end{figure}
\begin{figure}[h]
	\centering
	\includegraphics[width=0.45\textwidth]{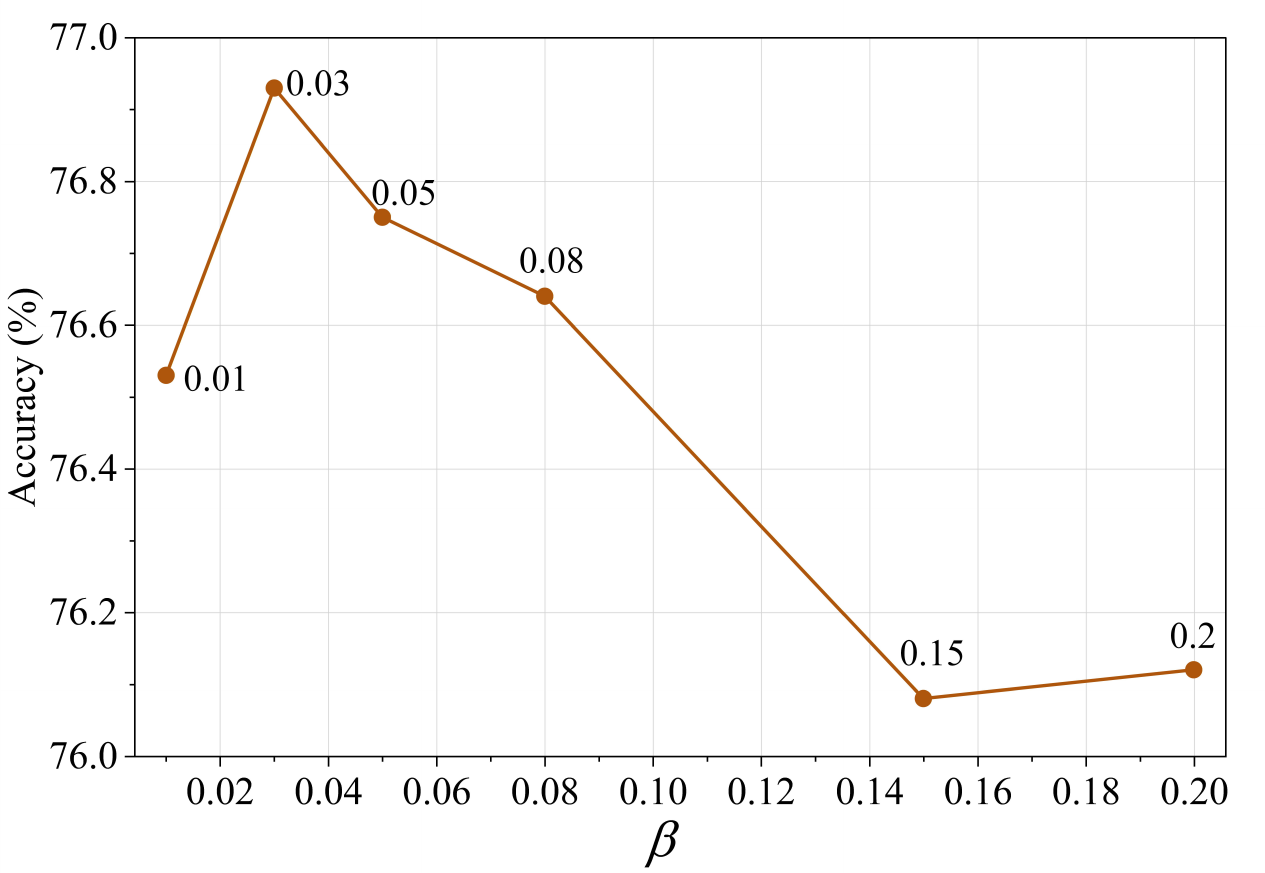}
	\caption{The parameter sensitivity experiment of $\beta$ (EQ. \ref{Double-counterfactual2}) in the CIFAR100 10-10 scenario.}
	\label{canshu}
    \vspace{-0.1cm}
\end{figure}

\label{Appendix:visualization}
\begin{figure*}[htbp]
    \centering
    \includegraphics[width=\textwidth]{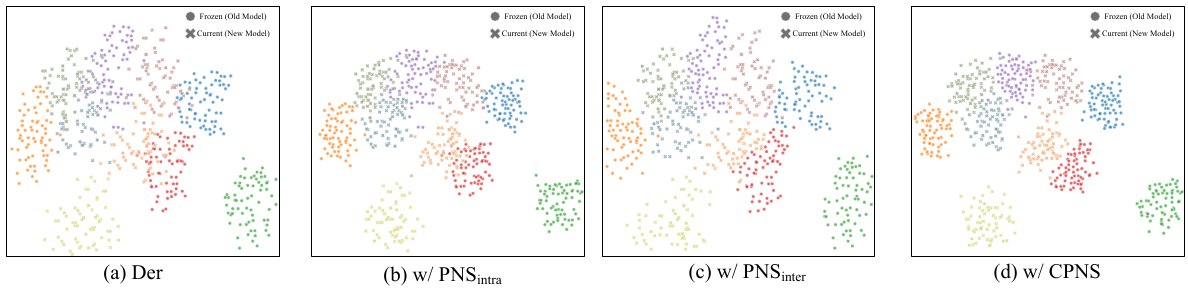} 
    \caption{t-SNE visualization on the CUB200 dataset. Panels (a)–(d) correspond to the baseline DER, $\text{PNS}_{\text{intra}}$, $\text{PNS}_{\text{inter}}$, and w/ CPNS, respectively.}
    \label{fig:tsne}
\end{figure*}

\noindent\textbf{Hyperparameter analysis.} The proposed method uses two hyperparameters, $\lambda$ and $\gamma$, to control the balance between $\text{PNS}_{\text{intra}}$ and $\text{PNS}_{\text{inter}}$, and the KL divergence during counterfactual generation, respectively. We find the best combination of them through grid search on the average incremental accuracy (Avg). We perform grid search on the CIFAR-100 10-10 scenario. The results are shown in Figure \ref{hunxiao}. As we can see from the figure, the CPNS-enhanced framework is relatively robust to these two hyperparameters. The best performance is achieved when their values are 0.5 and 1, respectively.

We also investigate the sensitivity of hyper-parameter $\beta$ defined in Eq. (\ref{Double-counterfactual2}), controlling the magnitude of the gradient-based perturbation for inter-task counterfactual generation. Experiments are conducted on the CIFAR-100 dataset under the 10-step class-incremental learning setting (10 tasks). We evaluate the model performance with $\beta$ sampled from $\{0.01, 0.03, 0.05, 0.08, 0.15, 0.20\}$. As illustrated in Figure \ref{canshu}, the average accuracy exhibits an initial increase followed by a moderate downward trend. When $\beta$ is relatively small (e.g., $\beta=0.01$), the perturbation magnitude is insufficient to effectively simulate feature collisions, limiting the benefits of the counterfactual intervention. The performance peaks at $\beta=0.03$ and remains competitive at $\beta=0.05$, suggesting that an appropriate level of perturbation helps the model define robust decision boundaries without losing semantic information. However, as $\beta$ exceeds $0.08$, we observe a slight decline in accuracy. This indicates excessive perturbation may drive the counterfactual features too far from original semantic information, introducing noise rather than informative causal constraints.

\subsection{Visualization}
\label{Appendix:visualization}

\noindent\textbf{t-SNE.} As shown in Figure~\ref{fig:tsne} (a), we utilize t-SNE~\cite{van2008visualizing} to visualize the two-dimensional scatter plots of feature embeddings for ten categories within the CUB200 fine-grained dataset across different methods. These ten categories encompass five from the current task and five from the previous task whose features remain frozen. All visualized representations are generated by the task-specific modules corresponding to their respective categories. Observations indicate that the DER method exhibits high variance in intra-class feature distributions and substantial feature conflicts occur between new and old categories. This issue stems from the high inter-class similarity in fine-grained datasets. When applied to such data, the feature diversity strategy of DER induces feature fragmentation, which subsequently causes severe feature conflicts. Figure~\ref{fig:tsne} (b)-(c) demonstrate that applying PNSintra enables the model to capture more comprehensive causal features within the current task, thereby reducing intra-class variance. The addition of PNSinter ensures feature separability across tasks, which significantly widens the decision boundaries between old and new classes. Utilizing the complete CPNS guarantees both the causal completeness of intra-task representations and the separability of inter-task representations. This integration effectively mitigates feature conflicts between old and new features.

\begin{figure}[htbp]
    \centering
    \includegraphics[width=0.8\linewidth]{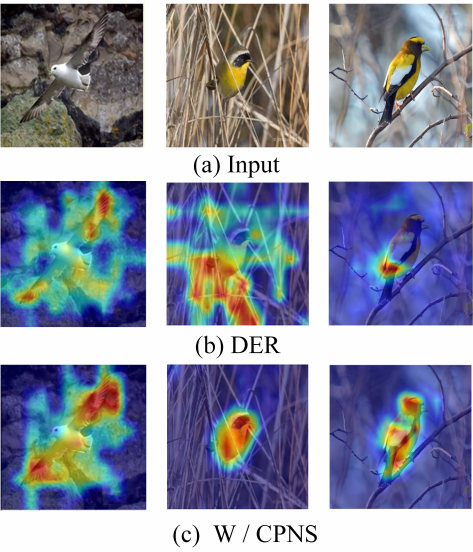} 
    \caption{Grad-CAM visualization on the CUB200 dataset. Compared to the baseline DER, our CPNS framework forces the model to focus on causally complete attributes.}
    \label{fig:gradcam}
    \vspace{-0.2cm}
\end{figure}

\noindent\textbf{Grad-CAM.} To further verify the effectiveness of the proposed CPNS regularization in mitigating feature suppression, we qualitatively analyze the task-specific features on the fine-grained CUB200 dataset~\cite{wah2011caltech} using Grad-CAM~\cite{selvaraju2017grad}. The visualization results in Figure \ref{fig:gradcam} present a comparison between the original input, the baseline method DER, and our method DER w/ CPNS. In fine-grained classification scenarios where inter-class similarity is high, the baseline DER often exhibits scattered activation regions easily influenced by non-causal background noise, such as branches or grass. This empirical evidence supports our hypothesis that ERM-driven learning tends to capture only the most accessible shortcut features to minimize training loss. In contrast, integrating CPNS enables the model to focus more accurately on the key discriminative parts of the birds, such as beak shapes, unique feather textures, and head patterns. These regions correspond to the Sufficient and Necessary (PNS) causal factors defining the species. By minimizing the inter-task PNS risk, our method explicitly models the feature collision state and ensures task-specific representations remain separable even when new tasks introduce semantically similar classes. The visualization demonstrates that CPNS prevents the feature space from fragmenting, thereby maintaining a robust semantic foundation for long-term class-incremental learning.

\section{Conclusion}

We argue that mitigating feature conflicts cannot solely rely on the strategy of feature diversity in expansion-based CIL. From a causal perspective, spurious feature correlations are the main cause of this collision, manifesting in two scopes: intra-task spurious correlations and inter-task spurious correlations. To address this, we propose a Probability of Necessity and Sufficiency (PNS)-based regularization method to guide feature expansion in class-incremental learning (CIL). Specifically, we extend the definition of PNS to expansion-based CIL, termed CPNS, which quantifies both the causal completeness of intra-task representations and the separability of inter-task representations.
Then, we introduce a dual-scope counterfactual generator based on twin networks to ensure the measurement of CPNS. Theoretical analyses confirm its reliability. The regularization is a plug-and-play method for expansion-based CIL to mitigate feature collision. Extensive experiments demonstrate the effectiveness of the proposed method.

\section{ACKNOWLEDGEMENT}
This work is supported by the National Natural Science Foundation of China (Nos. 62276218, U2468207 and 62506311), the Fundamental Research Funds for the Central Universities, China (No. 2682024ZTPY055).
\bibliography{example_paper}
\bibliographystyle{IEEEtran}

\vspace{-40pt}
\begin{IEEEbiography}[{\includegraphics[width=0.85in,height=0.95in,clip,keepaspectratio]{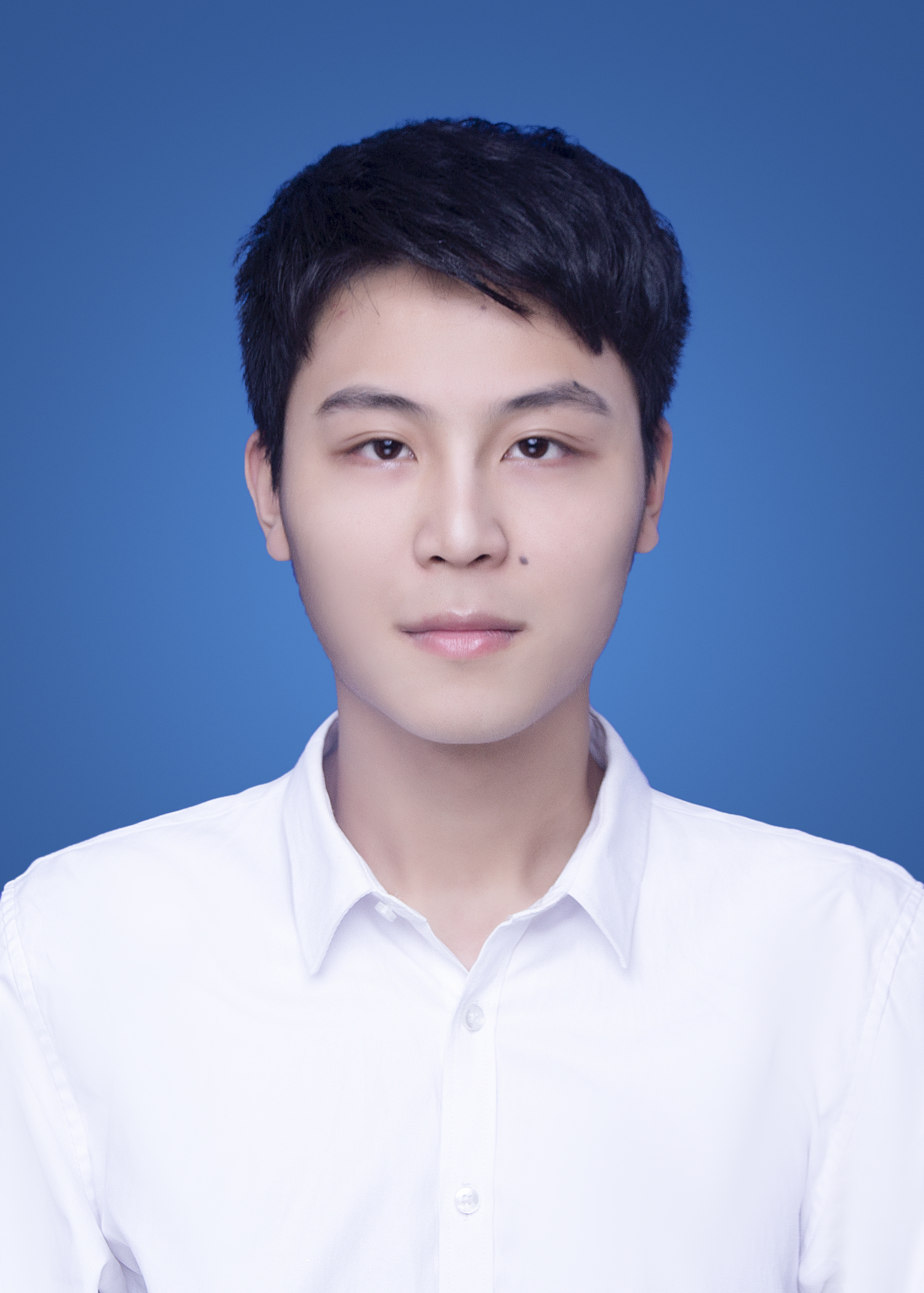}}]{Zhen Zhang}
is pursuing the Ph.D. degree in the School of Computing and Artificial Intelligence, Southwest Jiaotong University, Chengdu, China. His research interests include causal decoupling representation learning, causal inference, continual learning, and image quality assessment.
\end{IEEEbiography}

\vspace{-50pt} 

\begin{IEEEbiography}[{\includegraphics[width=0.85in,height=0.95in,clip,keepaspectratio]{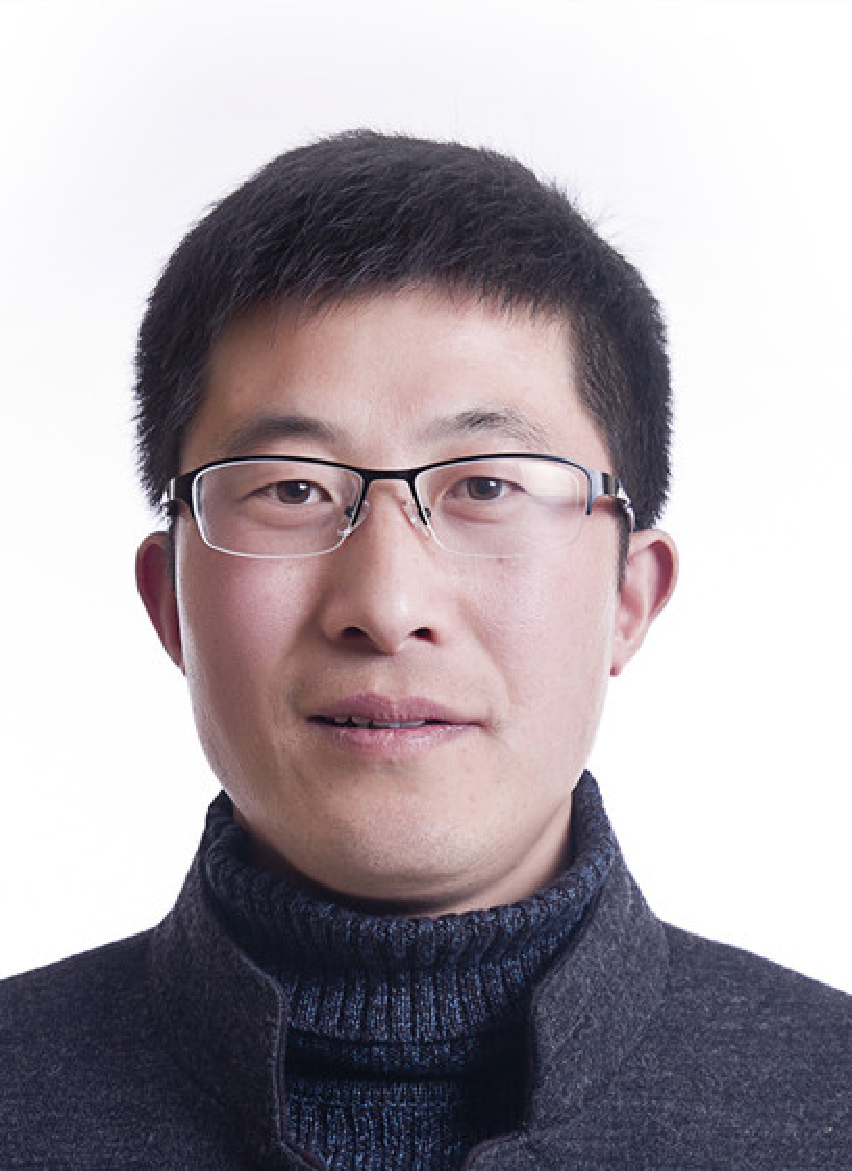}}]{Jielei Chu} (Senior Member, IEEE)
received the Ph.D. degree in computer science from Southwest Jiaotong University, Chengdu, China, in 2020. He serves as an Editorial Board Member for \emph{Scientific Reports}. He has published more than 40 papers in journals such as \emph{IEEE TPAMI}, \emph{IEEE TKDE}, \emph{IEEE TCYB}, and \emph{IEEE TMM}. His research interests include deep learning, semi-supervised learning, federated learning, and brain-inspired intelligence.
\end{IEEEbiography}

\vspace{-50pt} 

\begin{IEEEbiography}[{\includegraphics[width=0.85in,height=0.95in,clip,keepaspectratio]{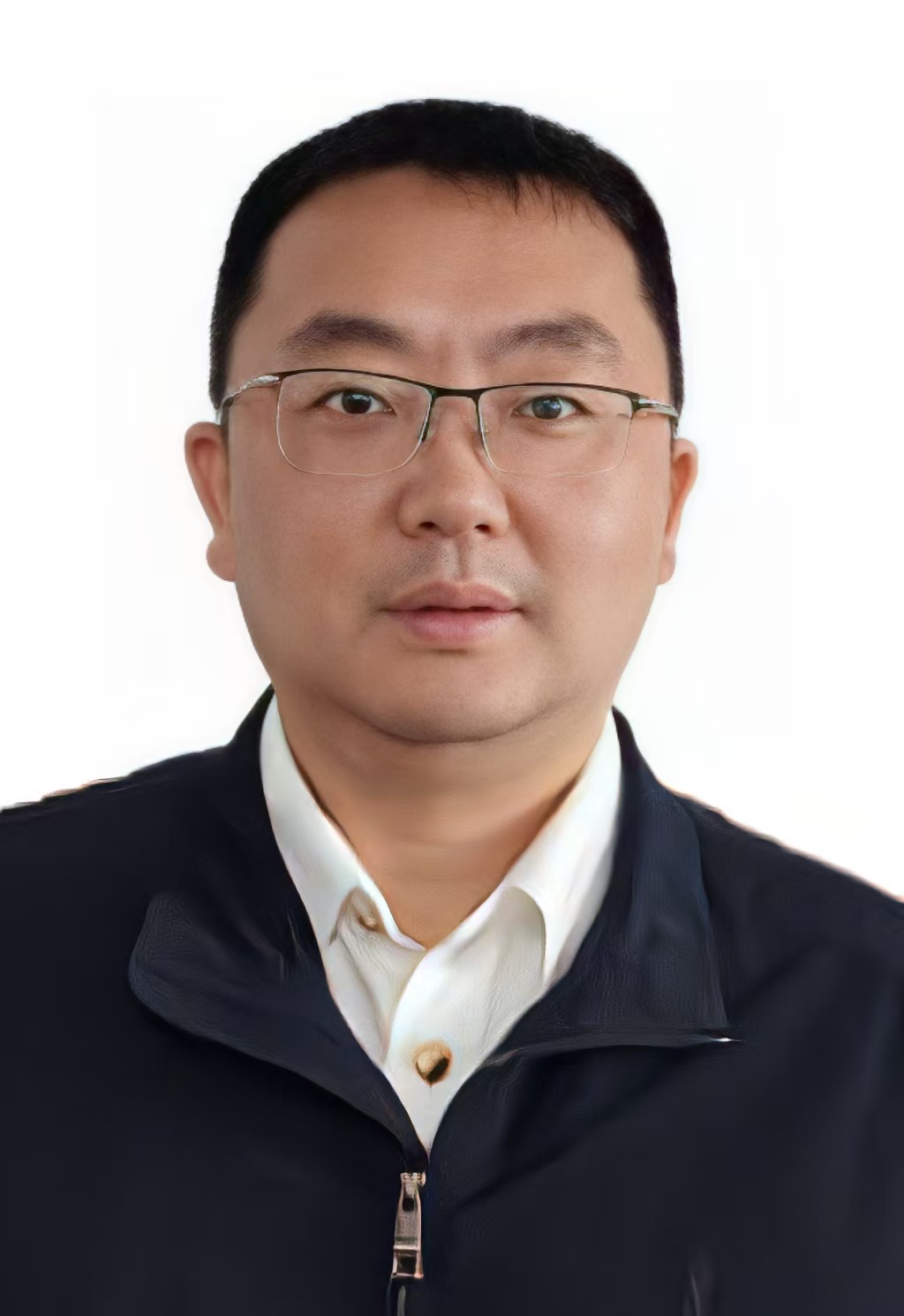}}]{Jiangtao Hu} received the M.S. degree in Public Health from Sichuan University, Chengdu, China, in 2005. He has published more than 30 papers in journals. His research interests include machine learning, data mining and  and public health testing.
\end{IEEEbiography}

\vspace{-50pt} 

\begin{IEEEbiography}[{\includegraphics[width=0.85in,height=0.95in,clip,keepaspectratio]{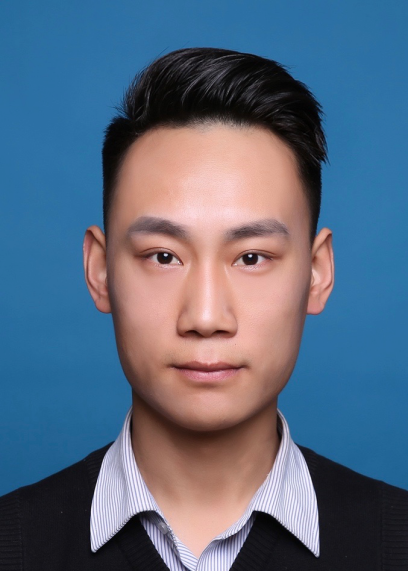}}]{Bin Liu} received the Ph.D. degree from School of Electronic Information and Communications, Huazhong University of Science and Technology (HUST), Wuhan, China, in 2023. Now he is with the School of Computing and Artificial Intelligence, Southwest Jiaotong University (SWJTU), Chengdu, China. He has published more than 10 scientific papers in prestigious international journals and conferences, including TPAMI, TKDE, ICDE. 
\end{IEEEbiography}

\vspace{-50pt} %

\begin{IEEEbiography}[{\includegraphics[width=0.85in,height=0.95in,clip,keepaspectratio]{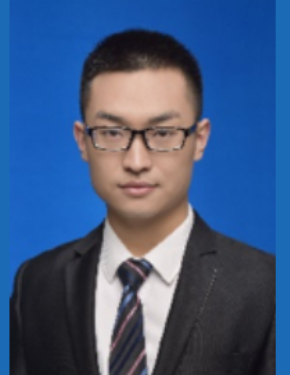}}]{Jie Wang} received the Ph.D. degree from the Southwest Jiaotong University, Chengdu, China, in 2024. He is currently an assistant researcher at School of Computing and Artificial Intelligence, Southwest Jiaotong University. His research interests include deep learning, multimodal learning and data mining.
\end{IEEEbiography}
\vspace{-50pt} %

\begin{IEEEbiography}[{\includegraphics[width=0.85in,height=0.95in,clip,keepaspectratio]{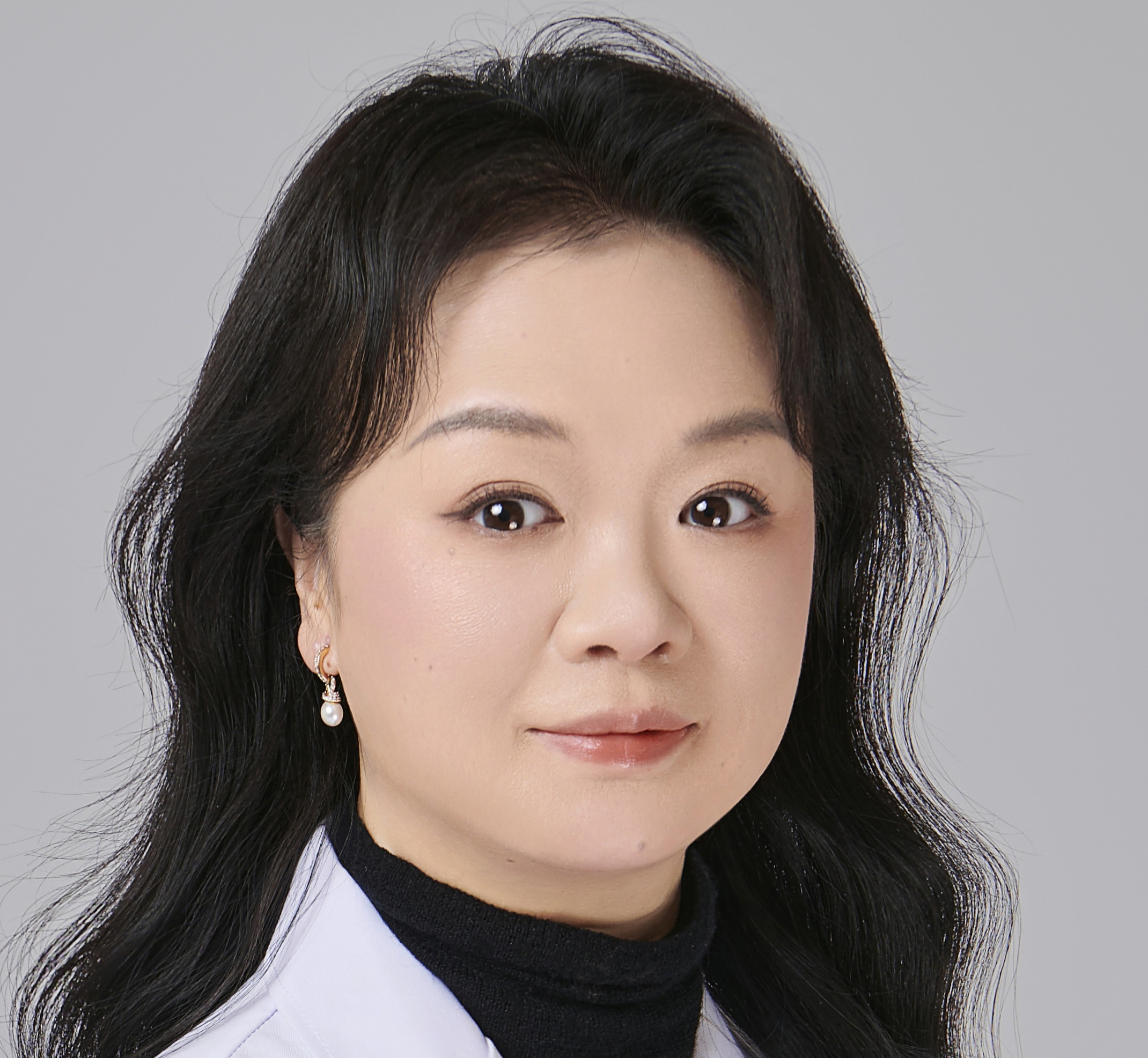}}]{Ya Liu}, M.D., Chief Physician, and Master's Supervisor. She received her MD degree from Chengdu University of Traditional Chinese Medicine in 2010. She serves as the Director of the Department of Endocrinology at the Hospital of Chengdu University of Traditional Chinese Medicine. She has published over 30 papers in journals such as the Journal of Colloid and Interface Science, iScience, Gut Microbes, and the Chinese Journal of Traditional Chinese Medicine. Her research interests include machine learning, the clinical and mechanistic study of diabetic foot and obesity, AI and big data-driven management strategies for endocrine diseases.
\end{IEEEbiography}
\vspace{-45pt} 

\begin{IEEEbiography}[{\includegraphics[width=0.85in,height=0.95in,clip,keepaspectratio]{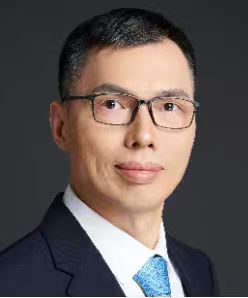}}]{Tianrui Li} (Senior Member, IEEE)
received the B.S., M.S., and Ph.D. degrees from Southwest Jiaotong University, Chengdu, China, in 1992, 1995, and 2002, respectively. He is currently a Professor and the Director of the Key Laboratory of Cloud Computing and Intelligent Techniques, Southwest Jiaotong University. He serves as Editor-in-Chief of \emph{Human-Centric Intelligent Systems}, Editor of \emph{Information Fusion}, and Associate Editor of \emph{ACM TIST}. He has authored or coauthored more than 500 papers in journals and conferences such as \emph{IEEE TPAMI}, \emph{IJCV}, CVPR, and ICCV. His research interests include big data, data mining, cloud computing, granular computing, and rough sets. He is a Fellow of IRSS and a Senior Member of ACM and IEEE.
\end{IEEEbiography}

\newpage
\vfill
\end{document}